\theoremstyle{plain}
\newtheorem{theorem}{Theorem}[section]
\newtheorem{lemma}[theorem]{Lemma}
\theoremstyle{definition}
\theoremstyle{remark}
\newcommand{\benchmarkname}{ColBench}
\newcommand{\methodname}{SWEET-RL}
\newcommand{\EE}{\mathbb{E}}
\title{\methodname{}: Training Multi-Turn LLM Agents on Collaborative Reasoning Tasks}
\author[1,2,\dagger]{Yifei Zhou}
\author[1]{Song Jiang}
\author[1]{Yuandong Tian}
\author[1]{Jason Weston}
\author[2]{Sergey Levine}
\author[1,*]{Sainbayar Sukhbaatar}
\author[1,*]{Xian Li}
\affiliation[1]{FAIR at Meta}
\affiliation[2]{UC Berkeley}
\abstract{
Large language model (LLM) agents need to perform multi-turn interactions in real-world tasks. However, existing multi-turn RL algorithms for optimizing LLM agents fail to perform effective credit assignment over multiple turns while leveraging the generalization capabilities of LLMs and it remains unclear how to develop such algorithms. To study this, we first introduce a new benchmark, \benchmarkname{}, where an LLM agent interacts with a human collaborator over multiple turns to solve realistic tasks in backend programming and frontend design. Building on this benchmark, we propose a novel RL algorithm, \methodname{} (RL with Step-WisE Evaluation from Training-time information), that uses a carefully designed optimization objective to train a critic model with access to additional training-time information. The critic provides step-level rewards for improving the policy model. Our experiments demonstrate that \methodname{} achieves a 6\% absolute improvement in success and win rates on \benchmarkname{} compared to other state-of-the-art multi-turn RL algorithms, enabling Llama-3.1-8B to match or exceed the performance of GPT4-o in realistic collaborative content creation.
}
\date{\today}
\begin{document}
\maketitle

\vspace{-0.1cm}
\section{Introduction}
\vspace{-0.1cm}

\begin{figure*}[!h]
     \centering
     \vspace{-0.3cm}
    \includegraphics[width=0.96\textwidth]{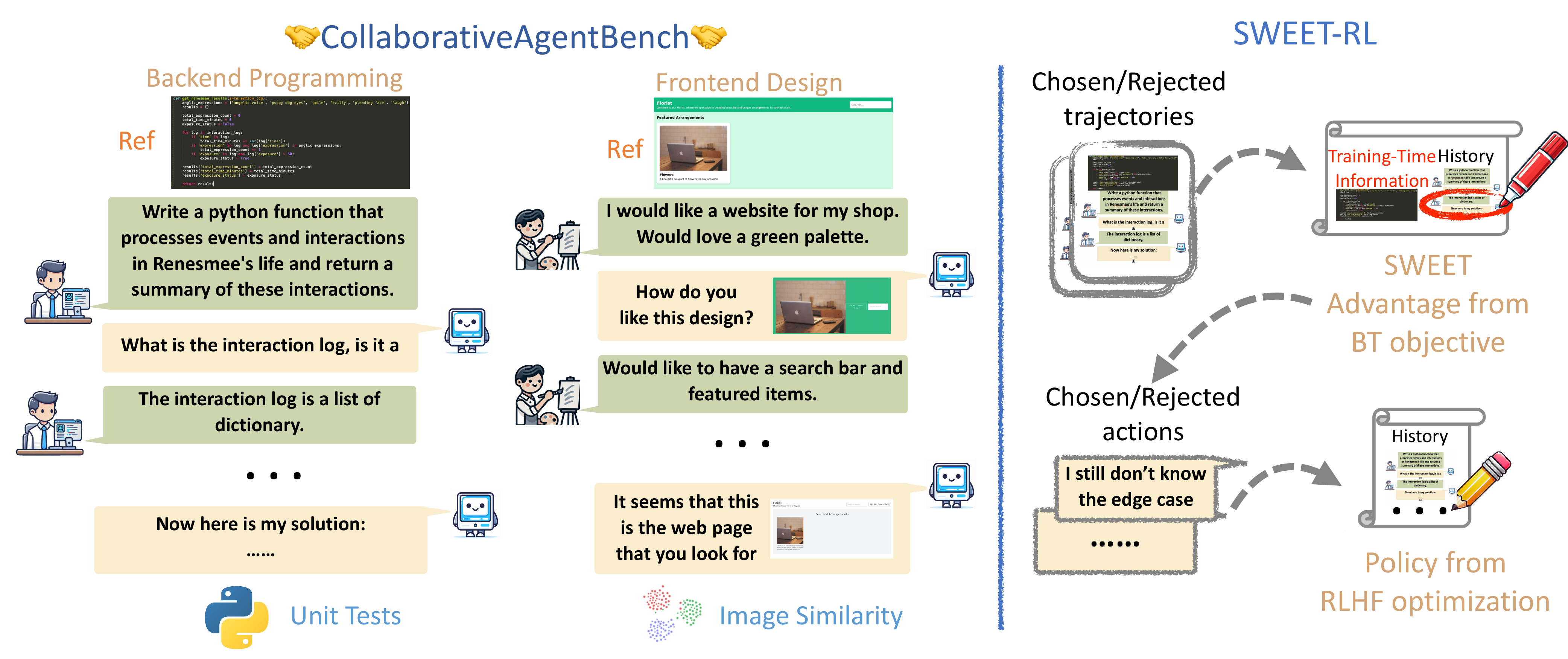}
    %%SL.1.17: Having a great hero figure like this is a good idea, but I think this needs a fair bit of work to more directly and effectively explain the key novel ideas
    %%SL.1.26: The left side (illutsrating the tasks) is ok, though formatting/organization/aesthetics should be improved (eg orange text overlapping on images that is hard to read), but the right side really doesn't make much sense to me. I would have expceted RHS to be an overview of the method, but right now I don't really understand what it is showing. 
    %%YZ.1.27: improved
     \vspace{-0.2cm}
        \caption{\emph{(Left)} \textbf{Overview of our \benchmarkname{} including Backend Programming and Frontend Design} that supports cheap and reliable evaluation of multi-turn RL algorithms for agents in realistic settings. \emph{(Right)} \textbf{The high-level motivation behind \methodname{}} that uses additional training-time information along with appropriate Bradley-Terry (BT) objective to perform effective credit assignment.}
        \label{fig:teaser} 
        \vspace{-0.2cm}
\end{figure*}

Large language models (LLMs) have the potential of serving as decision-making agents capable of executing complex tasks autonomously, such as navigating the web and controling devices~\citep{zhou2024webarenarealisticwebenvironment, zhou2024proposeragentevaluatorpaeautonomousskilldiscovery, xu2024aguvisunifiedpurevision, gur2023webagent, bai2024digirltraininginthewilddevicecontrol}, writing and maintaining code bases~\citep{jimenez2024swebenchlanguagemodelsresolve}, and serving as personal assistants~\citep{xie2024travelplannerbenchmarkrealworldplanning, jiang2024delegationdesigningidealagentic}, thanks to advances in the reasoning and generalization capabilities of LLMs~\citep{openai2024gpt4technicalreport, geminiteam2024geminifamilyhighlycapable,dubey2024llama3herdmodels}. However, to achieve the best performance on tasks that involve making a sequence of decisions, the agent needs to directly optimize for the multi-turn objective of interest such as success rates, which is more challenging than only imitating the most probable action at each turn as learnt in the next-token-prediction pre-training objective.

While a natural approach to directly optimize for multi-turn objective is to apply successful algorithms from single-turn Reinforcement Learning from Human Feedback (RLHF)~\citep{ouyang2022traininglanguagemodelsfollow, ziegler2020finetuninglanguagemodelshuman, christiano2023deepreinforcementlearninghuman}, such as RAFT~\citep{dong2023raftrewardrankedfinetuning}, DPO~\citep{rafailov2024directpreferenceoptimizationlanguage} and PPO~\citep{schulman2017proximalpolicyoptimizationalgorithms}, these methods do not perform explicit credit assignment across turns.
%%SL.1.26: I think we skipped a step. We need to explain to the reader why just regular training doesn't do it (i.e., zero-shot prompting/prompt engineering standard models, or SFT), then explain why standard LLM RL doesn't do it. We could do the former at the end of para 1 (it already tries to do it, but not convincingly enough) ---> EDITED the end sentence of paran 1
Consequently, they may suffer from high variance and poor sample complexity due to the long-horizon nature of complex sequential decision-making tasks~\citep{zhou2024archertraininglanguagemodel}.
%%SL.1.26: This is good, but I think it kind of understates the importance of the problem, makes it seem like doing multi-step is just a little optimizatino, whereas in reality you just can't solve multi-step problems with a bandit method
%%YZ.1.27: I feel that this may be not as convincing to NLP folks who have actually tried directly applying DPO/rejection fine-tuning to multi-turn tasks and it works okish
Another alternative is to apply value function learning methods, such as TD-learning~\citep{mnih2013playingatarideepreinforcement, zhou2024archertraininglanguagemodel, snell2023offlinerlnaturallanguage}. Yet that requires training a new task-specific value head on top of LLM representations, which may not generalize well with limited fine-tuning data (\autoref{fig:scaling}). As a result, it is still unclear what is the most effective multi-turn RL algorithm that fully takes advantage of the reasoning  capabilities of LLMs for training general, capable, and goal-directed agents.

To begin to address this challenge, we first notice that a benchmark for validating multi-turn RL algorithms for realistic LLM agents is required to meet the following three criteria: 1) sufficient task diversity for RL training without overfitting, 2) sufficient task complexity that challenges the reasoning and generalization capability of agents, and 3) minimum engineering overhead for fast research prototyping. However, as shown in \autoref{tab:benchmarks}, none of the existing benchmarks satisfy all of the three necessary criteria.
To address this gap, \textbf{our first contribution in this work} is to develop a benchmark, \textbf{Collaborative Agent Benchmark (\benchmarkname{})}, designed to support research on multi-turn RL algorithms for realistic LLM agent scenarios. Our benchmark focuses on the realistic domain of artifact creation, where the goal for agents is to interact with humans to produce a final artifact (e.g., code, web pages, or slides) that fulfills human expectations. To solve such tasks, the agent must act to understand the intent of the human collaborator and reason with the missing pieces, as only limited information is provided in each turn for complex and potentially multi-modal artifacts like code and web pages. To facilitate rapid iteration and cost-effective evaluation, we employ LLMs as human ``simulators'', where we crucially also provide the ground-truth artifacts to ensure faithful simulations in their responses. For reliable evaluations, we have developed a series of functional evaluators that measure the similarity between the agent-produced artifact and the ground-truth. Examples of tasks in \benchmarkname{} are shown in \autoref{fig:teaser}(left) and full trajectories in \autoref{app:full_qualitative}. 

For a multi-turn RL algorithm to perform effective credit assignments in such LLM agent settings, it needs to incorporate solutions to the following realistic challenges. Firstly, the agent is acting in a partially observable environment where some of the important task-relevant information is not directly revealed to the agent. In such cases, the agent needs to be properly awarded for information-seeking behaviors in the stochastic environment. Moreover, there is only limited amount of data available during fine-tuning for a large diverse set of tasks that may show up at test time. Therefore, the learning objective of the algorithm needs to effectively take advantage of the reasoning capabilities of LLMs for the best generalization performance. For the first challenge, we observe that additional training-time information, such as the final outcome and the reference solution, may be available during training. The knowledge of additional training-time information can offer a shortcut for credit assignments for an agent performing information-seeking behaviors without such knowledge. A natural approach to leveraging this training-time information is to train a value function that predicts the expected utility of each action as a scalar value. However, this introduces a fine-tuning objective that significantly differs from the next-token prediction pre-training objective of LLMs, which leads to inferior reasoning and generalization performance (\autoref{fig:scaling}).

\begin{table}[ht]
    \centering
    \small
    \setlength{\tabcolsep}{2.0pt}
        \caption{\textbf{Comparisons between \benchmarkname{} and existing benchmarks for multi-turn LLM agents.} As shown in the table, no existing benchmarks satisfy all of the three criterions necessary for developing efficient RL algorithms for fine-tuning LLM agents: 1) sufficient task diversity for RL training without overfitting, 2) sufficient task complexity that challenges the reasoning and generalization capability of agents, and 3) minimum engineering overhead for fast research prototyping. }
  \begin{adjustbox}{max width=\linewidth}
        \begin{tabular}{lccc}
            \toprule
            & {RL} & Complex & {Min}\\ 
            & {Training} & {Reasoning} & { Overhead}\\ 
            \midrule
             \textbf{Web/Device-Control Agents} & \multirow{2}{*}{Yes}  & \multirow{2}{*}{Yes} & \multirow{2}{*}{\textbf{No}}\\
             \citep{zhou2024webarenarealisticwebenvironment, xie2024osworldbenchmarkingmultimodalagents}& & & \\
             \textbf{SWEBench} & \multirow{2}{*}{Yes}  & \multirow{2}{*}{Yes} & \multirow{2}{*}{\textbf{No}}\\
             \citep{jimenez2024swebenchlanguagemodelsresolve, pan2024trainingsoftwareengineeringagents}& & & \\
             \textbf{Travel Planner} \citep{xie2024travelplannerbenchmarkrealworldplanning} & {\textbf{No}}  & {Yes} & {Yes}\\
             \textbf{LLF Benchmark} \citep{cheng2023llfbenchbenchmarkinteractivelearning} & {\textbf{No}}  & {\textbf{No}} & {Yes}\\
             \textbf{AgentBench} \citep{liu2023agentbenchevaluatingllmsagents} & {\textbf{No}}  & {Yes} & {Yes}\\
             \textbf{Mint} \citep{wang2024mintevaluatingllmsmultiturn} & {\textbf{No}}  & {Yes} & {Yes}\\
             \textbf{Dialop} \citep{lin2024decisionorienteddialoguehumanaicollaboration} & {\textbf{No}}  & {Yes} & {Yes}\\
             \textbf{LMRL Gym} \citep{abdulhai2023lmrlgymbenchmarksmultiturn} & {Yes}  & {\textbf{No}} & {Yes}\\
             \textbf{RL4VLM} \citep{zhai2024finetuninglargevisionlanguagemodels} & {Yes}  & {\textbf{No}} & {Yes}\\
             \midrule
             \textbf{\benchmarkname{}} (ours)  & {Yes}  & {Yes} & {Yes}\\
            \bottomrule
        \end{tabular}
    \end{adjustbox}
        \vspace{-2mm}
        \label{tab:benchmarks}
        \vspace{-0.3cm}
\end{table}

With these observations, \textbf{the second contribution of this work} is an easy-to-implement yet highly effective RL algorithm, \textbf{\methodname{}} (RL with \textbf{S}tep-\textbf{W}is\textbf{E} \textbf{E}valuation from \textbf{T}raining-Time Information) as depicted in \autoref{fig:teaser}(right). \methodname{} improves credit assignments by providing the critic with training-time information that is inaccessible to the actor.
Our novel turn-level critic takes advantage of this asymmetric observation spaces for the critic and actor. Furthermore, we propose directly learning the advantage function, which characterizes the effectiveness of each action at the current state, avoiding the need of first training a value function that predicts the expected utility of the current state and action. Finally, we also propose parameterizing the advantage function by the mean log probability of the action at each turn and training this advantage function through the Bradley-Terry objective at the trajectory level. We find such an objective aligns better with pre-trained LLMs compared to the common practice of training a value head on top of the hidden states of LLMs, leading to superior generalization results. In our experiments, we find that the use of asymmetric information during training and appropriate learning objectives result in a superior multi-turn agent on both realistic Backend Programming and Frontend Design tasks from \benchmarkname{}, with 6\% absolute success and win rates gains compared to other SOTA algorithms. As a result, the performance of Llama-3.1-8B~\citep{dubey2024llama3herdmodels} can match or even surpass the performance of SOTA proprietary models including GPT-4o and o1-mini~\citep{openai2024gpt4technicalreport}.

\vspace{-0.1cm}
\section{Related Work}
\vspace{-0.1cm}

\textbf{Benchmarks for LLM Agents.}
While many recent benchmarks have been proposed to evaluate the capabilities of LLM agents in various settings, such as software engineering~\citep{jimenez2024swebenchlanguagemodelsresolve, liu2023agentbenchevaluatingllmsagents}, web navigation~\citep{zhou2024webarenarealisticwebenvironment, koh2024visualwebarenaevaluatingmultimodalagents, deng2023mind2webgeneralistagentweb, yao2023webshopscalablerealworldweb}, device control~\citep{rawles2023androidwildlargescaledataset, rawles2024androidworlddynamicbenchmarkingenvironment, xie2024osworldbenchmarkingmultimodalagents}, and travel planning~\citep{xie2024travelplannerbenchmarkrealworldplanning}, most of them tend to focus on evaluation of state-of-the-art generalist LLMs without providing a research-friendly interactive environment and a set of training tasks to study multi-turn RL algorithm. While LMRL Gym~\citep{abdulhai2023lmrlgymbenchmarksmultiturn} and RL4VLM~\citep{zhai2024finetuninglargevisionlanguagemodels} offers this flexibility for comparing different multi-turn RL algorithms, the task settings focus on narrower domains and do not require the model to have strong reasoning capabilities. As shown in \autoref{tab:benchmarks}, there is no existing LLM agent benchmark that provides the flexibility for testing multi-turn RL algorithms on reasoning-intensive tasks with minimum engineering overhead. In contrast, \benchmarkname{} is the first benchmark designed to support research efforts in multi-turn RL algorithms on reasoning-intensive tasks, focusing on the realistic domain of artifact creation with reliable functional verifiers.
%%SL.1.17: I think the unique thing about what you are doing is really the sequential process, information elicitation, and temporal credit assignment. But I would recommend very carefully examining the related literature. Preference elicitation is an active area in recommender systems, and it wouldn't surprise me if there is a lot of RL work in this area. Maybe it's not as directly tied to LLMs, but I have a hunch there is some important related work we are missing. In terms of writing, I would also suggest trying to draw distinctions from prior works along more central axes: don't try to distinguish on a technicality ("not enough diversity" or something), because readers won't care about those distinctions, but address the core of why the big-picture challenges (e.g., sequential decision making) truly require a new task
%%YZ.1.24: addressed by reworking the paragraph to be more central to development of multi-turn RL algorithm

\textbf{Multi-turn RL algorithms for LLM Agents.}
Unlike single-turn scenarios such as single-turn preference optimization~\citep{christiano2023deepreinforcementlearninghuman, ziegler2020finetuninglanguagemodelshuman, casper2023openproblemsfundamentallimitations,xu2023some} where it suffices for LLMs to produce a single response without further interactions with the environment, multi-turn RL~\citep{zhou2024archertraininglanguagemodel, kumar2024traininglanguagemodelsselfcorrect, abdulhai2023lmrlgymbenchmarksmultiturn} captures realistic agent scenarios where LLM agents need to make a sequence of actions to complete the task, such as operating a unix terminal~\citep{liu2023agentbenchevaluatingllmsagents} and navigating through the web~\citep{zhou2024webarenarealisticwebenvironment}. While some early works directly applied successful methods from single-turn RL, such as REINFORCE~\citep{REINFORCE, wu2018learningextractcoherentsummary}, DPO~\citep{xiong2024buildingmathagentsmultiturn, song2024trialerrorexplorationbasedtrajectory}, and PPO~\citep{schulman2017proximalpolicyoptimizationalgorithms, szot2024largelanguagemodelsgeneralizable}, they often suffer from high variance when the horizon gets longer, resulting in poor % convergence
performance. While recent works have applied more advanced techniques from the deep RL literature such as Bellman bootstrapping~\citep{zhou2024archertraininglanguagemodel, snell2023offlinerlnaturallanguage} and Path Consistency~\citep{wang2024offlinereinforcementlearningllm, liu2024enhancingmultistepreasoningabilities, nachum2017bridginggapvaluepolicy} to reduce long-horizon variance, our work makes an important advancement to take advantage of the oft-neglected additional training-time information and make corresponding adjustments to the optimization objective for improved assignment. Finally, while some prior works apply an asymmetric actor-critic structure to perform sim-to-real transfer in robotics where the critic observes the latent state and the actor observes RGB inputs~\citep{pinto2017asymmetricactorcriticimagebased, wilson2020learningmanipulateobjectcollections, Salter2019AttentionPR}, less has been studied in terms of how such techniques can be applied in reasoning-intensive LLM tasks.

%%SL.1.17: I think that's kind of a cop-out. The purpose of related work is to address the criticism that there is nothing novel in the paper, and therefore the paper should be rejected. This is your opportunity to convince the reader that the method is novel, passing up on this opportunity just means we'll have to make that argument in the much more constrained format of the rebuttal, which is a lot harder.
%%YZ.1.24: rewrote above, hope it looks better

\textbf{Process reward models.} The use of a step-wise critic resembles the notion of a ``process reward model'' (PRM) in the reasoning literature~\citep{lightman2023letsverifystepstep, uesato2022solvingmathwordproblems}. PRMs evaluate the ``correctness'' of each reasoning step and can be trained from automated supervision~\citep{luo2024improvemathematicalreasoninglanguage, setlur2024rewardingprogressscalingautomated, yuan2024freeprocessrewardsprocess, hwang2024selfexploreenhancingmathematicalreasoning} without costly human-annotated process labels. Once a PRM is trained, it can be used for searching with more test-time compute~\citep{snell2024scalingllmtesttimecompute, yuan2024freeprocessrewardsprocess} or accelerating the explorations in on-policy RL~\citep{setlur2024rewardingprogressscalingautomated, shao2024deepseekmathpushinglimitsmathematical,lin2025step} (i.e. when the policy is trained on the online collected trajectories by itself). In contrast, in our work, the step-wise critic is mainly used to perform credit assignment as an intermediate ``reward proxy'' to directly optimize the policy without the need to collect additional interaction data. This benefit is  important in LLM agent tasks where collecting on-policy data involves expensive interactions with an external environment.
%%SL.1.17: This is a really interesting connection, but it's kind of surprising and seemingly draws on ideas that have not been mentioned before in the introduction. Can we better tie this to the rest of the paper so far?
%%YZ.1.24: reworked, does it get better?

% 1, unlike single-turn RL, multi-turn RL xxx
% 2, Early works have directly applied successful methods such as reinforce, dpo, and ppo
% 3, Recently, bellman bootstrapping and path consistency have been applied to perform better credit assignment
% 4, instead of coming up with state-of-the-art multi-turn rl algorithm, this work focusing on validating the best design RL design choices for training LLM as collaborative agents including the parameterization for step-level critic and asymmetric observation space between the critic and the actor.

% \textbf{Human-agent interactions.}

\vspace{-0.1cm}
\section{Collaborative Agent Benchmark (\benchmarkname{})}
\vspace{-0.1cm}
% Despite the importance of effective multi-turn RL algorithms for building autonomous LLM agents, creating a realistic scalable benchmark that  evaluates different RL algorithms for LLM agents with reasoning-intensive tasks presents significant challenges. 
In this section, we first outline the foundational design principles of our \benchmarkname{} to address challenges in developing multi-turn RL algorithms for LLM agents, followed by a detailed explanation of the two specific tasks: %the specific setups for two tasks in \benchmarkname{} including 
Backend Programming and Frontend Design.

\vspace{-0.1cm}
\subsection{Design Principles}
\vspace{-0.1cm}
\textbf{(1) Sufficient task complexity that challenges the rea-
soning and generalization capability of agents.} As the ultimate goal is to enable the LLM agents to complete tasks on our behalf in the real world, where the agents need to address complicated challenges in out-of-distribution scenarios, it is essential that the benchmark reflects such realistic reasoning and generalization challenges. To achieve this, \benchmarkname{} is designed to align with realistic artifact creation tasks, where the objective of collaboration is to produce tangible outputs such as code or web pages. In order to do well, it is necessary for agents to dive deep into the structure of the code and nuanced differences in visual design for unseen requests and a potentially sub-optimal collaborator. 

%%YZ.1.28: might not have space for this
% As shown in the task example in Figure~\ref{fig:qualitative}, in order to arrive at the correct solution of python function, the agent not only needs to figure out what the intended python function is but also figures out what is right information to seek such as the concrete definition of ``symmetry'' and how to deal with corner cases like whether to include duplicate objects.
%%SL.1.26: This makes sense as a criterion, but the discussion here could be much more convincing as to why this criterion matters and how we fulfill it. E.g., consider giving some examples of the challenges the benchmark has
%%YZ.1.27: Get back to it after adding concrete qualitative examples in the appendix, but it might be 

\textbf{(2) Minimum %engineering
overhead for fast research prototyping.} To achieve this, we ground each collaboration task with the goal to re-create the exact same product as the reference artifact. In this way, the human collaborator can be easily simulated by an LLM with access to the reference artifact to faithfully answer the uncertainties from the LLM agent. Furthermore, the presence of a reference artifact allows \benchmarkname{} to employ objective and functional evaluation metrics that assess the similarity between the final collaborative product and the reference artifact. As a result, the only requirement needed for setting up \benchmarkname{} is API access for some LLM calls and some Python packages for running code and rendering HTML web pages. Although in real-world scenarios the human collaborator might only have a general idea of the desired final product, having access to a clear reference artifact is a reasonable assumption that significantly enhances reliability.

\textbf{(3) Sufficient task diversity for RL training without
overfitting.} As the best LLMs are often trained on a huge amount of data, it is essential that our benchmark contains enough tasks to understand the scalability of different multi-turn RL algorithms while at the same time ensuring the reliability of simulation and evaluation. Therefore, we designed \benchmarkname{} to be highly scalable, with more than 10k different procedurally-generated tasks, which also can be easily expanded as needed by incorporating more existing artifacts such as code and web pages. The difficulty of \benchmarkname{} can be easily adjusted 
%as well  as the base capability of LLMs increase 
by simply creating more collaboration tasks with more complicated code bases and web pages.

Next, we will describe detailed setups of \benchmarkname{}, including Backend Programming and Frontend Design.
\vspace{-0.1cm}
\subsection{Backend Programming Collaborations}
\vspace{-0.1cm}

\textbf{Task description.} In this task, the agent is required to collaborate with the human simulator to write a custom Python function (up to 50 lines). In the beginning of the collaboration, the agent is provided with a high-level description and the signature of the function. However, many concrete details, such as what conditions should be considered and what to do at edge cases, are not provided, and the agent has to reason and decide what clarifications are needed from the human simulator. The human simulator will provide a brief explanation in natural language to each clarification question based on the reference code visible only to the human simulator, but it will not write code. The interactions are limited to 10 back-and-forth rounds between the agent and the human simulator. The interaction ends either when the agent has decided to give the final solution or the maximum number of rounds has been reached. The success of the agent is evaluated by 10 hidden unit tests for each function for 0/1 reward only at the end of each collaboration.

\textbf{Data generation.} Python functions, high-level descriptions, and unit tests are generated by prompting Llama-3.1-70B-Instruct~\citep{dubey2024llama3herdmodels} to extract a python function as inspired by an Internet excerpt from DCLM~\citep{li2024datacomplmsearchgenerationtraining}. Only the tasks where the generated python functions can pass their corresponding unit tests are kept. We generate 10k such tasks in the train set and 1k tasks in the test set where tasks in the test set are manually inspected by authors. 15k offline train trajectories are generated by zero-shot prompting Llama-3.1-8B-Instruct as agent and Llama-3.1-70B-Instruct as human simulator.

\vspace{-0.1cm}
\subsection{Frontend Design Collaborations}
\vspace{-0.1cm}

\textbf{Task description.} In this task, the agent is required to collaborate with the human simulator to design a web page by writing an HTML snippet (around 100 lines). At the beginning of the collaboration, the agent is provided with a high-level description of the web page. Again, many concrete details such as the layout and color palette of the web page are missing and only visible to the human simulator. At each round, the agent has a chance to write an HTML solution and it will be rendered by the web browser. The human simulator will be able to examine the rendered web page from the agent and the reference web page, then describe their differences to the agent. Similar to the backend programming collaboration, the interaction ends either when the agent has decided to give the final solution or the maximum number of 10 rounds has been reached. The performance of the agent is evaluated by the cosine similarity of CLIP~\citep{radford2021learningtransferablevisualmodels} embeddings between the agent solution and reference web page, the best metric found in prior works~\citep{si2024design2codefarautomatingfrontend} for frontend design. It serves as a reward within the range of 0 to 1 only at the end of the collaboration.

\textbf{Data generation.} The tasks containing reference web pages and high-level descriptions are from WebSight~\citep{laurençon2024unlockingconversionwebscreenshots}. We include 10k such tasks for training and 500 for the test set. The test set tasks are manually inspected by the authors. We generate 6k offline train trajectories by zero-shot prompting Llama-3.1-8B as agent and Qwen2-VL-72B~\citep{yang2024qwen2technicalreport} as human simulator.

% Maybe put a table here summarizing the statistics of each task

% \begin{table}\centering
% \begin{adjustbox}{width=0.45\textwidth}
% \begin{tabular}{ c |c c c }
%  Task & Train Tasks & Trajectories &  Test Tasks \\ 
%  \midrule
%  \textbf{Backend Programming} & 10k & 20k  & 1k \\  
%  \textbf{Frontend Design} & 1k & 6k & 100
% \end{tabular}
% \end{adjustbox}
% \label{tab:task_statistics}
% \caption{Dataset statistics for tasks in \benchmarkname{}.}
% \end{table}
\vspace{-0.1cm}
\section{\methodname{}: RL with \textbf{S}tep-\textbf{W}is\textbf{E} \textbf{E}valuation from \textbf{T}raining-Time Information} \label{sec:method}
\vspace{-0.1cm}
To introduce our method, we will begin by first defining the terminology for multi-turn RL on \benchmarkname{}. Then, we will motivate the two-stage training procedure of \methodname{} for first training a step-wise critic with additional training-time information and using it as a per-step reward model to train the actor with careful algorithmic choices. An overview of the two-stage training procedure is presented in \autoref{fig:method}.

%%SL.1.26: could we have a better transition here that briefly restates the main motivation for this stuff?
% Our proposed training method takes advantage of additional training-time information and careful algorithmic choices %of learning objectives 
% to perform credit assignment. We begin by first defining the terminology for multi-turn RL on \benchmarkname{}. Then, we will motivate the two-stage training procedure of \methodname{} for first training a step-wise critic with training-time information and using it as a reward model to train the actor, as shown in \autoref{fig:method}.

\begin{figure*}[!ht]
     \centering
     \vspace{-0.3cm}
    \includegraphics[width=0.96\textwidth]{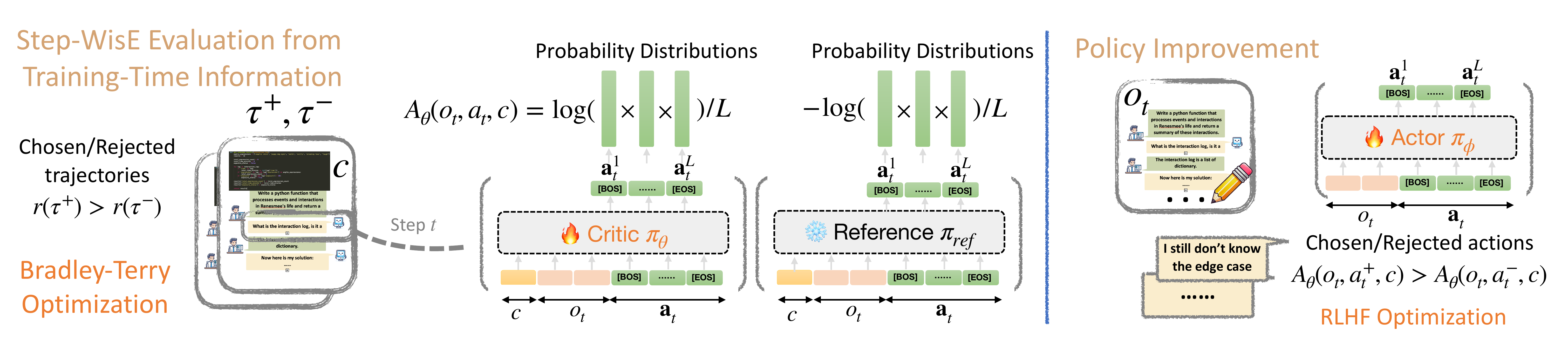}
    %%SL.1.26: this is getting better, but there is kind of a ton of math on this figure, and many of the symbols are hard to track down... is there some way to illustrate a schematic of the method without quite so much math in the figure? I think the main role of a figure like this is to explain how the parts of the method fit together, vs the specific details like the equations could well be explained in text
     \vspace{-0.3cm}
        \caption{textbf{An overview of the training procedure of \methodname{}.} At a high level, we first apply Bradley-Terry objective to directly train a step-wise advantage function with access to additional training-time information. Once the advantage function is trained, we perform policy improvement by using the advantage function as a reward model for each turn.}
        \label{fig:method} 
        \vspace{-0.5cm}
\end{figure*}
\vspace{-0.1cm}
\subsection{Problem Setup} \label{sec:preliminaries}
\vspace{-0.1cm}

We frame the problem of collaboration between humans and agents as a finite-horizon partially observable Markov decision process (POMDP) $\mathcal{M} = \{\mathcal{O}, \mathcal{C},\mathcal{A}, \mathcal{T}, \mu_1, \mathcal{R}, N\}$. 
Here, $\mathcal{O}$ and $\mathcal{C}$ are the observable and hidden  parts of the state space. In the beginning of each episode an initial instruction $o_1 \in \mathcal{O}$ and some hidden training-time information $c \in \mathcal{C}$ (e.g., the reference solution) are drawn from the initial state distribution $\mu_1$. The hidden training-time information $c$ remains unchanged during the episode.

At the $t$-th turn, the observation $o_t$ for the agent includes all the interaction history, and the agent needs to take an action $a_t \in \mathcal{A}$ by outputting a response consisting of tokens $a_t^{1:L}$. After taking an action, the user will respond to the agent, and the new state for the agent (represented by the transition function $\mathcal{T}$) is derived by appending the latest interaction to the interaction history. At each step, the agent has a chance to receive a scalar reward $r(o_t, a_t, c) \in \mathcal{R}$. The episode ends either when the agent outputs a termination token or when a maximum number of $t=N$ rounds of interactions has been reached.
The objective of RL is to train a policy that can generate token sequences maximizing the cumulative rewards 
$\sum_{t=1}^N r(o_t, a_t, c)$
throughout a rollout (we assume no reward decay for simplicity).
We consider the offline setting of learning from a past interaction dataset as online interactions with humans may be costly to obtain.

The Q-function for a policy \( \pi \) represents the expected cumulative reward of a specific action at the current step, followed by adherence to policy \( \pi \): 
$Q^\pi(o_t,a_t, c) = \mathbb{E}_{\pi} \left[\sum_{t'=t}^N r(o_{t'}, a_{t'}, c)\right].$
The value function of \( \pi \), \( V^\pi(o_t, c) \), is defined as the expected Q-value, \( \mathbb{E}_{a_t \sim \pi} [Q^\pi(o_t, a_t, c)] \), where actions \( a_t \) are sampled from \( \pi \). The advantage function \( A^\pi(o_t,a_t, c) \) indicates the relative benefit of taking action \( a_t \) in state \( (o_t,c) \) and is calculated as the difference between the Q-value and the state's value under the policy: $A^\pi(o_t,a_t,c) = Q^\pi(o_t, a_t,c) - V^\pi(o_t,c)$. We directly model the turn-wise advantage function \( A \) using parameters \( \theta \), and use that advantage $A_\theta$ to train the policy parameterized by \( \phi \) to generate tokens $a_t^{1:L}$ within each turn.

\vspace{-0.1cm}
\subsection{Learning turn-wise advantage functions}\label{sec:credit_assignment}
\vspace{-0.1cm}
%%SL.1.26: Somewhere in Section 4, it would be really good to have a concise yet clear overview of the method so that the various design decisions in this section don't come as a surprise to the reader.
%%YZ.1.28: added a para in the end of above subsection

% While a natural way for approaching multi-turn agent tasks is to directly apply off-the-shelf single-turn RLHF methods only based on the final outcome such as DPO~\citep{rafailov2024directpreferenceoptimizationlanguage} and KTO~\citep{ethayarajh2024ktomodelalignmentprospect}, they might suffer from high variances when the horizon is long~\citep{zhou2024archertraininglanguagemodel}. Ideally, we would like some mechanisms to perform explicit credit assignments (e.g. to identify the good actions that lead to success over multiple turns).

To perform explicit credit assignments in reasoning-intensive tasks,
some prior works have explored learning an explicit value function first and derive the advantage of each individual action from the learnt value function~\citep{bai2024digirltraininginthewilddevicecontrol, zhou2024archertraininglanguagemodel, snell2023offlinerlnaturallanguage}. However, in our experiments we found that such value functions do not generalize well when only a limited number of samples for fine-tuning are available as shown in \autoref{sec:discussions}. 
We hypothesize this is because learning an accurate value function in reasoning-intensive tasks is itself a hard task and does not effectively take advantage of the reasoning and generalization capability of pre-trained LLMs. %of next-token prediction. 

%This is potentially because learning an accurate value function that estimates the expected future returns in reasoning-intensive tasks is itself a hard task and it does not effectively take advantage of the reasoning and generalization capability from the pre-training of next-token prediction. 

Since the ultimate goal of performing credit assignment is to derive the advantages for each action which may be a easier task for LLMs compared to estimating the expected future returns, we propose to directly learn the advantage function for each turn-wise action $a_t$. Inspired by the success of preference optimization in funetuning LLMs~\citep{christiano2023deepreinforcementlearninghuman, ziegler2020finetuninglanguagemodelshuman}, we propose to train the the turn-wise advantage function from preference pairs of trajectories.
Given two trajectories under the same task with additional training-time information $c$, we label them chosen $\tau^+$ and rejected $\tau^-$ judged by their cumulative rewards. This allows us employ the Bradley-Terry objective~\citep{Bradley1952RankAO,rafailov2024rqlanguagemodel} for fine-tuning 
\begin{equation}
        \mathcal{J}_\text{BT} =-\log \left[\sigma\left(\sum_{t} \beta r(o^+_t, a^+_t, c) - \sum_{t} \beta r(o^-_t, a^-_t, c)\right)\right], \label{eqn:BT}
\end{equation}
where $o^+_t, a^+_t $ and $o^-_t, a^-_t$ are from $\tau^+$ and $\tau^-$ respectively, and $\beta$ is a hyperparameter.
We can rewrite this objective using the advantage function:
\begin{equation}
        \mathcal{J}_{A}(\theta) = -\log \left[\sigma\left(\sum_t \beta A_\theta(o^+_t, a^+_t, c) - \sum_{t} \beta A_\theta(o^-_t, a^-_t, c)\right)\right].\label{eqn:step_advantage}
\end{equation}
Intuitively, similar to the objective of single-turn RLHF to learn a high reward for each chosen response and a low reward for each rejected response, the effect of \autoref{eqn:step_advantage} is to increase the advantage for each action in the chosen trajectory and lower the advantage for each action in the rejected trajectory. For completeness, we provide a theoretical derivation in \autoref{app:theory}. To further align the learning objective with next-token-prediction pre-training, we parameterize the advantage function by re-purposing the existing language model head of the LLM:
%to achieve generalization capability:
\begin{equation}
    A_\theta (o_t, a_t, h) = \frac{1}{L} \sum_{l=1}^L \left[ \log \frac{ \pi_\theta (a^l_t|o_t, a_t^{1:l-1}, c) }{ \pi_\text{ref} (a^l_t|o_t, a_t^{1:l-1}, c) } \right] \label{eqn:advantage_parameterization}
\end{equation}
where $\pi_\theta$ is the LLM model that we train to act as the advantage function, while $\pi_\text{ref}$ is a frozen initial seed model. 
We find it important to include $\frac{1}{L}$ to normalize the advantage by the length of the response to stabilize training.
\vspace{-0.1cm}
\subsection{Optimizing the agent by turn-wise advantage}
\vspace{-0.1cm}

%%SL.1.26: this is good, but it would be a good idea to note in related work how this relates to other prior works that propose this type of asymmetric architecture, since others have done this sort of thing in other settings (eg image-based vs state-based RL)

Our key observation is that while our final policy $\pi_\phi$ cannot condition on the hidden information $h$, such information is available during training time.
Since the advantage LLM $\pi_\theta$ will  only be used during training, it can take $c$ as input as in \autoref{eqn:advantage_parameterization}.
% Our important observation is that the turn-wise advantage parameterized by $\pi_\theta$ and policy $\pi_\phi$ do not need to have the same observation space for unbiased policy improvements. 
Intuitively, many realistic problems such as collaboration and math reasoning have some hidden training-time information like reference solutions. % used to perform training-time rewards.
If the turn-wise advantage function has access to such training-time information, it should be in a better position to judge whether the action taken by the policy is on the right track. 

Therefore, we provide additional training-time information $c$ to the turn-wise advantage function while only the interaction history $o_t$ is provided to the policy, resulting in an asymmetric actor-critic structure. In principle, any successful algorithm from the RLHF literature can be used to optimize the per-turn policy $\pi_\phi$ by treating the interaction histories as prompts and the turn-wise advantage function $A_\theta$ as the reward model. In this stage of training the policy, no interaction from human collaborators is needed. 

For simplicity we choose to use DPO~\citep{rafailov2024directpreferenceoptimizationlanguage} for training. For each turn $t$ we first sample candidate actions from the current policy given  interaction history $o_t$, and rank them by the learnt turn-wise advantage function to obtain chosen and rejected actions. 
We then optimize the policy for each turn %separately 
using the standard DPO loss:

%\begin{small}
\vspace{-5mm}
\begin{equation}
        \mathcal{J}_\pi (\phi) = 
        -\log \sigma \left( \beta' \frac{\log \pi_\phi(a^+|o_t)}{\log \pi_\text{ref}(a^+|o_t)} - \beta' \frac{\log \pi_\phi(a^-|o_t)}{\log \pi_\text{ref}(a^-|o_t)} \right). \label{eqn:DPO}
\end{equation}
%\end{small}
In practice, for each turn we sample $16$ candidate actions and take random actions from top-50\% quantile as chosen and from the bottom-50\% quantile as rejected.

\vspace{-0.1cm}
\section{Experiments} \label{sec:experiments}
\vspace{-0.1cm}
%%SL.1.26: is there any experimental validation of the benchmark, or just the algorithm? it kind of seems like we sort of forgot about the benchmark at this point, so it kind of comes across like we are not actually validating the benchmark itself
%%YZ.1.28: addressed by adding one section dedicated to it
The purpose of our experiments is to validate 
%Our experiments test
the effectiveness of \methodname{} as a multi-turn RL algorithm  that trains LLM agents 
for complex collaborative tasks. Specifically, they are designed to answer the following questions: \textbf{(1)} How do existing generalist models and multi-turn RL algorithms perform for collaborative tasks on \benchmarkname? \textbf{(2)} How does \methodname{}'s performance compare with other SOTA multi-turn RL algorithms for training LLM agents on reasoning-heavy tasks? \textbf{(3)} How does the use of asymmetric information help with credit assignments? \textbf{(4)} What are the best algorithmic choices for effectively taking advantage of LLM's reasoning and generalization capability to perform credit assignments? \textbf{(5)} How does \methodname{} scale as the number of training samples increase compared to baselines?

\begin{table*}[!t]
    \centering
    \small
    \vspace{-2mm}
    \setlength{\tabcolsep}{5.0pt}
        \caption{\textbf{Comparisons of different LLMs and multi-turn RL algorithms on \benchmarkname}. \methodname{} is able to achieve more than 6\% performance gain over other multi-turn RL algorithms, % in terms of accuracy and win rate on both tasks, 
        enabling Llama-3.1-8B-Instruct to  be on par with larger proprietary models.}
        \vspace{0.2cm}
      \begin{adjustbox}{max width=.9\textwidth}

        \begin{tabular}{llcccc}
            \toprule
            && \multicolumn{2}{c}{\textbf{Backend Programming}} & \multicolumn{2}{c}{\textbf{Frontend Design}} \\ % & \multicolumn{2}{c}{\textbf{WebShop}} \\
            \cmidrule(lr){3-4} \cmidrule(lr){5-6}
            && \texttt{\% Tests Passed} &\texttt{Success Rate} & \texttt{Cosine Similarity} &  \texttt{Win Rate}\\ % & \texttt{Train} & \texttt{Test} \\
            \midrule
             \multirow{4}{*}{\textsc{Single-Turn}} & Llama-3.1-8B-Instruct  & 11.8 & \phantom{0}6.9  &   63.1 & 13.6\\ 
            & Llama-3.1-70B-Instruct & 24.2 & 14.8 & 61.8 & 13.2\\ 
            & O1-Mini & 22.4 & 13.1  & 70.2 &  23.8\\ 
            & GPT-4O & 27.6  & 16.2 & 68.6  &  23.8\\
            % & Llama-3.1-70B Zeroshot-Shot &  &   &   \\ 
            \midrule
             \multirow{3}{*}{\textsc{SOTA LLMs}} & Llama-3.1-70B-Instruct & 48.0 & 35.0 & 73.7 & 39.8\\
             & GPT-4O & 54.6  & \bf 40.4 & 78.1  &  50.0\\
            & O1-Mini & 43.2 & 30.3  & 77.5 & 47.2  \\ 
            \midrule
            \multirow{4}{*}{Llama-3.1-8B-Instruct} & Zero-Shot &  34.2  & 22.4 & 72.4 & 33.8\\
            & Rejection Fine-Tuning &  40.9  & 28.2 & 75.2  & 38.6\\
            & Multi-Turn DPO &  48.0  & 34.4 & 76.9 & 42.8\\
            % & Multi-Turn KTO &    &  &  & \\
            & \methodname{} (ours) &  \textbf{56.8}  & \textbf{40.4} & \textbf{77.7} & \textbf{48.2}\\
            % \midrule
            % \multirow{5}{*}{Llama-3.1-70B} & Zero-Shot &    &  &  & \\
            % & Rejection Fine-Tuning &    &  &  & \\
            % & Multi-Turn DPO &    &  &  & \\
            % & Multi-Turn KTO &    &  &  & \\
            % & Ours &    &  &  & \\
            \bottomrule
        \end{tabular}
        \end{adjustbox}
        \label{tab:main-table}
        \vspace{-0.10cm}
\end{table*}

\vspace{-0.1cm}
\subsection{Experimental Setup}
\vspace{-0.1cm}

%%SL.1.26: it's important in this paragraph to explain why we are comparing to various methods, and why these are the right comparisons to include both to evaluate our method and to evaluate the benchmark itself
%%YZ.1.28: added more text here and there to explain 
\textbf{Baseline comparisons.} We compare \methodname{} with a variety of SOTA LLMs and multi-turn RL algorithms on \benchmarkname. We consider \textbf{Llama-3.1-8B-Instruct} and \textbf{Llama-3.1-70B-Instruct}~\citep{dubey2024llama3herdmodels} as representatives of SOTA open-source LLMs and \textbf{GPT4-O} and \textbf{O1-Mini} as representatives of SOTA proprietary LLMs. We test these models in both a \textbf{single-turn} and a \textbf{collaborative} setting to understand how LLM agents can benefit from multi-turn collaborative interactions on \benchmarkname. We compare different RL algorithms based on Llama-3.1-8B-Instruct. We first consider a simple yet effective baseline \textbf{Rejection Fine-Tuning} widely used for LLM agent fine-tuning~\citep{zhou2024proposeragentevaluatorpaeautonomousskilldiscovery, dong2023raftrewardrankedfinetuning}, where Supervised Fine-Tuning (SFT) is performed on successful trajectories to minimize the negative log-likelihood loss. Furthermore, we consider a recent effective baseline \textbf{Multi-Turn DPO} that applies a variant of DPO~\citep{rafailov2024directpreferenceoptimizationlanguage} to the multi-turn setting~\citep{xiong2024buildingmathagentsmultiturn, song2024trialerrorexplorationbasedtrajectory}. Multi-Turn DPO first constructs contrastive trajectory pairs where the chosen trajectory achieves a higher trajectory reward compared to the rejected trajectory, and uses the DPO loss to directly optimize the policy without using a critic.
% the following loss to directly optimize the policy without using a critic:
% \begin{equation*}    
% \small
%         \begin{aligned}
%             \mathcal{J}^{DPO}_\pi (\phi) = -  & \log \sigma \Bigg( \beta \sum_{(o^+, a^+) \in \tau^+} \frac{\log \pi_\phi(a^+|o^+)}{\log \pi_\text{ref}(a^+|o^+)} \\
%             & - \beta \sum_{(o^-, a^-) \in \tau^-} \frac{\log \pi_\phi(a^-|o^-)}{\log \pi_\text{ref}(a^-|o^-)} \Bigg).
%         \end{aligned}
% \end{equation*}

\textbf{\methodname{}} first trains a turn-wise advantage model using the same model architecture with access to training-time information and then optimizes the policy with respect to rewards given by the turn-level advantage model as described in \autoref{sec:method}. We use the reference code and web page as training-time information for Backend Programming and Frontend Design respectively. Due to the multi-modal nature of the reference web page, we instantiate the advantage LLM with a similar sized VLM Qwen2-VL-7B-Instruct~\citep{yang2024qwen2technicalreport} with a regression head on top of the mean representations of all visual and text tokens.

Note that our experiments focus on the setting of learning from historically collected data (offline setting), and thus RL algorithms like PPO~\citep{schulman2017proximalpolicyoptimizationalgorithms} and REINFORCE~\citep{Williams2004SimpleSG} that require on-policy data collection do not apply. This is because in the real world, online human collaboration data requires extensive human annotations and can be costly to obtain.

\textbf{Evaluation metrics.} Each task on \textbf{Backend Programming} comes with 10 unit tests, and we report the average percentage of tests passed and task success rate where all 10 unit tests for a task are passed. We report the average cosine similarity of the final web page and the reference web page measured by the image representations from Clip-vit-base-patch32~\citep{radford2021learningtransferablevisualmodels} for \textbf{Frontend Design}. To provide a more interpretable metric, we also include the win rate with respect to GPT-4O, where the model that lands on a web page closer to the reference web page as measured by cosine similarity wins for each task.

\vspace{-0.1cm}
\subsection{Comparisons on \benchmarkname}
\vspace{-0.1cm}
\autoref{tab:main-table} shows the performance comparison of different SOTA LLMs and multi-turn RL algorithms across different tasks on \benchmarkname. First, comparing ``single-turn'' results and the other collaborative results, we note that multi-turn collaborations can greatly enhance the performance of LLM agents for artifact creations by more closely aligning the final product with the reference ``expectations'' of human simulators.
If the agent has to directly produce the final product in a single turn (top rows), even the best-performing GPT-4O can only achieve 16.2\%. In contrast, the success rates for all models are doubled (e.g., the success rate for Llama-3.1-8B-Instruct increases from 6.9\% to 22.4\%) if they are given the chance to interact with human simulators for several turns and gather more information. Nonetheless, multi-turn collaboration remains a challenging task even for proprietary LLMs like GPT-4O and O1-Mini, which can only achieve a success rate of 40.4\% and 30.3\%, respectively. Despite the improvements of O1-Mini on symbolic reasoning tasks such as math and coding, we observe that these improvements do not directly result in taking a better strategy for multi-turn collaborative agents, indicating that downstream fine-tuning is still necessary for LLMs to optimize collaboration with humans.

\vspace{-0.1cm}
\subsection{Comparing other algorithms with \methodname{}}
\vspace{-0.1cm}

After fine-tuning with downstream data, we find that even the most naive RL algorithm, Rejection Fine-Tuning, can improve the performance on both tasks, with 5.8\% and 4.8\% improvements on Backend Programming success rate and Frontend Design win rate, respectively. However, we observe that oftentimes Rejection Fine-Tuning simply teaches the LLM to ``memorize'' the solution for each training task without learning a generalizable strategy to tackle a novel test tasks. While this issue is mitigated for Multi-Turn DPO by introducing ``negative gradients'' for the rejected trajectories, the improvement is still limited without proper credit assignments over a long horizon. By explicitly training a turn-level reward model to perform credit assignments through the use of training-time information, we observe a significant gain of \methodname{} over Multi-Turn DPO on both tasks (6\% improvement in success rate for Backend Programming and 5.4\% improvement in win rate for Frontend Design). In fact, the resulting model of \methodname{} using Llama-3.1-8B-Instruct not only matches the performance of Llama-3.1-70B-Instruct with more than 8 times parameters but also achieves competitive performance with SOTA proprietary models like GPT-4O and O1-Mini.

\vspace{-0.1cm}
\subsection{Analysis} \label{sec:discussions}
\vspace{-0.1cm}
%%SL.1.29: maybe "Analysis" is better? I think people are conditioned to expect that "Discussion" is a short section at the end without any new results, and might simply not read it. In general, it would be good to add a few sentences either at the beginning of this section, or at the top-level experiment section to explain what the goal of this subsection is and what it's about
With the advantage of \methodname{} over baselines presented in \autoref{tab:main-table}, this section presents analytical results to understand this advantage and compare alternative designs.

\begin{figure*}[!h]
     \centering
    \includegraphics[width=0.96\textwidth]{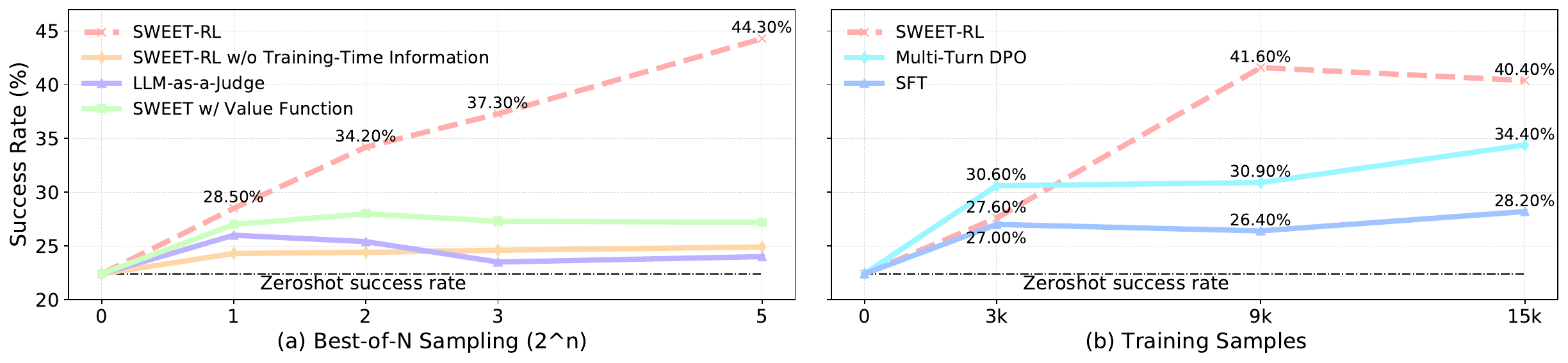}
    %\vspace{-4mm}
        \caption{\emph{(a)} \textbf{Scaling curve of Best-of-N sampling with respect to different step reward model on Backend Programming.} Results show that SWEET can best tell good actions on a turn-wise basis, resulting in the best scaling with respect to Best-of-N sampling. Note that this curve is different from test-time scaling curve because SWEET exploits additional training-time information. \emph{(b)} \textbf{Performance scaling of different multi-turn RL algorithms with respect to the amount of fine-tuning data on Backend Programming.} While \methodname{} takes more data initially to learn a reliable critic, it quickly catches up and achieves a better converging performance.}
        \label{fig:scaling} 
        \vspace{-0.3cm}
\end{figure*}

\textbf{How should we use training-time information to help credit assignment?} We carry out Best-of-N sampling experiments on Backend Programming to directly compare the capability of different methods to perform credit assignments. At each turn, N candidate actions are sampled from a fixed actor Llama-3.1-8B-Instruct and different methods are used to choose the best action to be executed. We compare our reward model in \methodname{} with three other natural choices: 1) \methodname{} w/o Training-Time Information -- the same as \methodname{} except that it only has the interaction history as inputs without access to the reference solution, 2) LLM-as-a-Judge uses a Llama-3.1-8B-Instruct to  compare pairwise the quality of each action based on the interaction history and reference solution, and 3) \methodname{} w/ Value Function where a regular classification head on top of s Llama-3.1-8B-Instruct backbone is trained to predict the expected success rate given the interaction history and the reference solution with a binary classification loss. Performance comparisons are presented in \autoref{fig:scaling}\emph{(a)} and qualitative comparisons are presented in \autoref{app:credit_assignment_qualitative}. First of all, we observe that the use of training-time information can significantly enhance the capability to perform credit assignment, evidenced by the huge performance gap between \methodname{} and \methodname{} w/o Training-Time Information. While Best-of-N sampling with respect to a fixed LLM-as-a-Judge can result in some improvements over the zeroshot success rate, this improvement is limited. Qualitatively, we found that a fixed LLM-as-a-Judge can easily get distracted by the length and format of the response without actually attending to its utility for task success. Finally, while being standard practice in the deep RL literature, the use of a value function fails to achieve comparable scaling performance compared to \methodname{}. This shows the importance of the careful RL algorithmic choices of \methodname{} and that the go-to practice of training a value function may generalize poorly in unseen tasks.

\textbf{What are the best parameterization choices for the critic to perform credit assignment?} In \autoref{tab:ablation}, we perform ablation experiments to understand the effect of different parameterizations for the advantage function of the critic. In particular, we consider two alternative parameterizations: 1) \methodname{} w/ Regression Head uses a regression head on top of the mean pooled representations of the last hidden state across all tokens, 
2) \methodname{} w/o Training-Time Information only uses the interaction history as inputs without access to the reference solution, 
and 3) \methodname{} w/o Normalization where we do not perform the normalization step of division by the number of tokens in \autoref{eqn:advantage_parameterization}. As shown in \autoref{tab:ablation}, similar to the conclusion from the previous section, the use of a regression head does not generalize well compared to \methodname{} and training-time information significantly improves the performance. Without the additional normalization step, we find that the trained actor quickly collapses by generating shorter and shorter responses, showing the importance of carefully chosen parameterizations for multi-turn RL algorithms.

%\vspace{-4mm}
\begin{table}[ht]
    \centering
    \small
    \setlength{\tabcolsep}{2.0pt}
        \caption{\textbf{Ablation study on different parameterizations of the critic.} Results show that the parameterization of using the mean log probability significantly outperform the other natural choices.}
        \vspace{4mm}
        \label{tab:ablation}
  \begin{adjustbox}{max width=\linewidth}
        \begin{tabular}{lcc}
            \toprule
            & {\% Tests Passed} &{Success Rate}\\ % & \texttt{Train} & \texttt{Test} \\
            \midrule
            Rejection Fine-Tuning &  40.9  & 28.2 \\
            Multi-Turn DPO &  48.0  & 34.4 \\
             \methodname{} &  \textbf{56.8}  & \textbf{40.4} \\
             \midrule
             \methodname{} w/ Regression Head &  45.3  & 36.2 \\
             \methodname{} w/o Train-Time Info. &  44.0  & 31.2 \\
             \methodname{} w/o Normalization &  \phantom{0}4.2  & \phantom{0}3.6 \\
            \bottomrule
        \end{tabular}
    \end{adjustbox}
\end{table}
%\vspace{-1cm}

\textbf{How does \methodname{} scale with the amount of fine-tuning data?} Additionally, we compare the scaling performance of \methodname{} compared to the multi-turn RL baselines Rejection Fine-Tuning and Multi-Turn DPO. Results are presented in ~\autoref{fig:scaling}\emph{(b)}. Although \methodname{} requires more data to train a reliable critic for performing credit assignment where it under-performs Multi-Turn DPO with 3k fine-tuning samples available, \methodname{} quickly catches up with more samples once the turn-wise critic is trained and results in a significantly improved converging performance.

\textbf{How does \methodname{} work for different model architectures and off-policy data?} Finally, \autoref{tab:70b} presents additional comparison experiments on Backend Programming using a stronger base model Llama-3.1-70B-Instruct to study how \methodname{} works across different model architectures. We use the same offline data generated by Llama-3.1-8B as \autoref{tab:main-table} to understand the effectiveness of different methods for taking advantage of off-policy data (i.e. offline generated by a different model). We first observe that Rejection Fine-Tuning fails to learn from data generated by a worse model, with the success rate dropping from 35.0\% to 31.9\%. This is potentially because the objective of Rejection Fine-Tuning forces Llama-3.1-70B-Instruct to imitate sub-optimal trajectories generated from the worse Llama-3.1-8B-Instruct word-by-word. While Multi-turn DPO is able to achieve a big improvement even using off-policy data generated from an inferior model (35.0\% to 41.8\% in success rate), \methodname{} still maintains a similar gap of 3.8\% through performing explicit credit assignments with training-time information (41.8\% compared to 45.6\% in success rate).

\begin{table}[ht]
    \centering
    \small
    \setlength{\tabcolsep}{2.0pt}
        \caption{\textbf{Comparison results on Backend Programming using Llama-3.1-70B-Instruct as the base model. } Results show that \methodname{} achieves a similar gain over the baselines when using stronger Llama-3.1-70B-Instruct as the base model.}
        \vspace{4mm}
        \label{tab:70b}
  \begin{adjustbox}{max width=\linewidth}
        \begin{tabular}{lcc}
            \toprule
            Llama-3.1-70B-Instruct & {\% Tests Passed} &{Success Rate}\\ % & \texttt{Train} & \texttt{Test} \\
            \midrule
            Zeroshot &  48.0  & 35.0 \\
            Rejection Fine-Tuning &  45.5  & 31.9 \\
            Multi-Turn DPO &  56.7  &  41.8\\
             \midrule
             \methodname{} & \textbf{60.2}  & \textbf{45.6} \\
            \bottomrule
        \end{tabular}
    \end{adjustbox}
\end{table}

% \begin{table}[ht]
%     \centering
%     \small
%     \setlength{\tabcolsep}{5.0pt}
%         \begin{tabular}{ccc}
%             \toprule
%             & \texttt{\% Tests Passed} &\texttt{Success Rate}\\ % & \texttt{Train} & \texttt{Test} \\
%             \midrule
%              Ours &  \yifei{56.8}  & \yifei{40.4} \\
%              Ours w/ regression head &  34.2  & 22.4 \\
%              Ours w/o asymmetric critic &  40.9  & 28.2 \\
%              Ours w/ sum log &  48.0  & 34.4 \\
%             \bottomrule
%         \end{tabular}
%         \caption{PRM.}
%         \label{tab:prm}
%         \vspace{-0.50cm}
% \end{table}
\vspace{-0.1cm}
\section{Conclusion}
\vspace{-0.1cm}

To advance the development of effective multi-turn RL algorithms that perform effective credit assignments, this paper first introduces a benchmark, \benchmarkname{}, focusing on the realistic domain of collaborative artifact creation. \benchmarkname{} is the first LLM agent benchmark designed to validate multi-turn RL algorithms for reasoning-intensive tasks with minimum engineering overhead. Building upon \benchmarkname{}, we develop a novel multi-turn RL algorithm, \methodname{}, leveraging additional training-time information and appropriate algorithmic choices, achieving significantly improved performance over SOTA baselines in this domain. Our experiment results on \benchmarkname{} show that off-the-shelf deep RL methods for multi-turn LLM agents can lead to poor generalization performance. While \methodname{} serves as a preliminary step for closing this gap, there are a lot of future research opportunities to develop a better multi-turn RL algorithm in this important area of LLM agents.

\section*{Impact Statement}
This paper advances the development of more effective multi-turn RL algorithms and better human-agent collaborations. An effective collaborative LLM may significantly improve human productivity in many areas such as content creation. However, various safety concerns may arise as LLM agents take over more tasks from humans where they might be subject to malicious attacks or conduct unexpected behaviors. We leave this important direction for future research.

% In the unusual situation where you want a paper to appear in the
% references without citing it in the main text, use \nocite

\bibliography{paper}
\bibliographystyle{assets/plainnat}

%%%%%%%%%%%%%%%%%%%%%%%%%%%%%%%%%%%%%%%%%%%%%%%%%%%%%%%%%%%%%%%%%%%%%%%%%%%%%%%
%%%%%%%%%%%%%%%%%%%%%%%%%%%%%%%%%%%%%%%%%%%%%%%%%%%%%%%%%%%%%%%%%%%%%%%%%%%%%%%
% APPENDIX
%%%%%%%%%%%%%%%%%%%%%%%%%%%%%%%%%%%%%%%%%%%%%%%%%%%%%%%%%%%%%%%%%%%%%%%%%%%%%%%
%%%%%%%%%%%%%%%%%%%%%%%%%%%%%%%%%%%%%%%%%%%%%%%%%%%%%%%%%%%%%%%%%%%%%%%%%%%%%%%
\newpage
\appendix
\onecolumn
\section{Hyperparameters}
For completeness and reproducibility, we present all hyperparameters used in \methodname{} and all baselines in \autoref{tab:hyperparametersZZ}. In general, we found that the performances of both Multi-Turn DPO and SWEET-RL are consistent with respect to the hyperparameters in the DPO loss objective such as learning rate and beta, and the inclusion of a negative-log-likelihood coefficient of 0.01 helps in most cases (\citet{pang2024iterativereasoningpreferenceoptimization} also found this to be useful).

\begin{table}[h] 
\centering
\caption{Hyperparameters for \methodname{} and baseline methods for all experiments.}
\label{tab:hyperparametersZZ}
\resizebox{.8\linewidth}{!}{  
\begin{tabular}{c|c|cc} 
\toprule
& & Backend Programming& Frontend Design \\
\hline
\multirow{2}{8em}{Rejection Fine-Tuning} & learning rate & 2e-7 & 2e-7\\
& batch size& 32 & 32 \\
& epochs & 4 & 8 \\
\hline
\multirow{4}{8em}{Multi-Turn DPO} & learning rate & 2e-7 & 2e-7\\
& beta & 0.1 & 0.1\\
& negative-log-likelihood loss coefficient & 0.01 & 0.01\\
& batch size& 8 & 8 \\
& epochs & 4 & 8 \\
\hline
\multirow{8}{8em}{\methodname} & critic learning rate & 2e-7 & 2e-7\\
& critic beta & 0.1 & 0.1\\
& critic negative-log-likelihood loss coefficient & 0.01 & 0.01\\
& critic batch size& 8 & 8 \\
& critic epochs & 4 & 8 \\
& actor lr & 2e-7 & 2e-7\\
& actor beta & 0.1 & 0.1\\
& actor negative-log-likelihood loss coefficient & 0.01 & 0.01\\
& actor batch size& 8 & 8\\
& actor epochs & 1 & 1 
\end{tabular}}
\end{table}

\section{Theoretical Justifications} \label{app:theory}
In this section, we will provide some theoretical justifications for the derivation of our method. We first show that the trajectory-level Bradley-Terry objective can be similarly used for learning the advantage function:

\begin{lemma}\label{lem:rA}
    For an MDP with each state $(o,c) \in \mathcal{O} \times \mathcal{C}$ and for any $\pi: \mathcal{O} \mapsto \mathcal{A}$, assume the transition function $\mathcal{T}(o, a, c)$ is deterministic for any $(o, a, c)$ then the following holds $\forall \tau^+, \tau^-$:
\begin{align*}
    -\log \left[\sigma\left(\sum_{t} \beta r(o^+_t, a^+_t, c) - \sum_{t} \beta r(o^-_t, a^-_t, c)\right)\right] = -\log \left[\sigma\left(\sum_{t} \beta A^\pi(o^+_t, a^+_t, c) - \sum_{t} \beta A^\pi(o^-_t, a^-_t, c)\right)\right]
\end{align*}
\end{lemma}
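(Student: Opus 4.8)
The plan is to show that the two Bradley-Terry expressions are identical by proving that the argument of $\sigma$ is the same in both cases. Since $-\log\sigma(\cdot)$ is applied to both sides, it suffices to establish the equality of the arguments, i.e. that
\begin{equation*}
    \sum_{t} \beta r(o^+_t, a^+_t, c) - \sum_{t} \beta r(o^-_t, a^-_t, c) = \sum_{t} \beta A^\pi(o^+_t, a^+_t, c) - \sum_{t} \beta A^\pi(o^-_t, a^-_t, c).
\end{equation*}
Factoring out $\beta$, the real content is to relate the cumulative reward $\sum_t r(o_t, a_t, c)$ along any single trajectory to the cumulative advantage $\sum_t A^\pi(o_t, a_t, c)$ along that same trajectory. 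The natural object to track is the telescoping gap between the two sums.

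First I would expand the advantage using its definition, $A^\pi(o_t, a_t, c) = Q^\pi(o_t, a_t, c) - V^\pi(o_t, c)$, and then rewrite $Q^\pi$ in terms of the immediate reward plus the value at the next state. Here the determinism assumption on $\mathcal{T}$ is essential: for a deterministic transition, $Q^\pi(o_t, a_t, c) = r(o_t, a_t, c) + V^\pi(o_{t+1}, c)$ exactly, with no expectation over successor states. Substituting gives $A^\pi(o_t, a_t, c) = r(o_t, a_t, c) + V^\pi(o_{t+1}, c) - V^\pi(o_t, c)$ for $t < N$, and at the terminal step the future value vanishes so the advantage reduces to $r(o_N, a_N, c) - V^\pi(o_N, c)$. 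Summing over $t = 1, \dots, N$ along a fixed trajectory, the value terms telescope, leaving
\begin{equation*}
    \sum_{t=1}^N A^\pi(o_t, a_t, c) = \sum_{t=1}^N r(o_t, a_t, c) - V^\pi(o_1, c).
\end{equation*}

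The key observation is then that both $\tau^+$ and $\tau^-$ share the same initial instruction and the same hidden training-time information $c$ (they are two trajectories under the same task), so their initial observations $o_1$ coincide and the offset $V^\pi(o_1, c)$ is identical for both. When I form the difference of the two cumulative-advantage sums, these $-V^\pi(o_1, c)$ terms cancel exactly, so the advantage difference equals the reward difference. Multiplying through by $\beta$ and applying $-\log\sigma$ yields the claimed identity.

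I expect the main obstacle to be stating the telescoping cleanly at the boundary, in particular handling the terminal turn correctly so that no spurious $V^\pi(o_{N+1}, c)$ term survives, and making explicit that the two trajectories genuinely share the same $o_1$ and $c$ so that the $V^\pi(o_1, c)$ offsets cancel rather than merely being assumed equal. The determinism hypothesis does the heavy lifting by collapsing the Bellman expectation into an exact equality; without it one would only obtain the identity in expectation over trajectories rather than pointwise for each fixed $\tau^+, \tau^-$.
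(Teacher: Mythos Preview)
Your proposal is correct and follows the same telescoping idea as the paper. It is in fact more careful than the paper's own argument: the paper sets out to prove the stronger per-trajectory identity $\sum_t r(o_t,a_t,c)=\sum_t A^\pi(o_t,a_t,c)$ and asserts that the telescoping residue $\sum_{t=1}^{N-1}\bigl[\mathbb{E}_{o'_{t+1}}V^\pi(o'_{t+1},c)-V^\pi(o_t,c)\bigr]$ vanishes under determinism, but determinism only collapses the expectation to $V^\pi(o_{t+1},c)$, after which the telescope leaves $V^\pi(o_N,c)-V^\pi(o_1,c)$, and the dropped $-V^\pi(o_N,c)$ from the terminal step still gives a net offset of $-V^\pi(o_1,c)$. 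Your route, which carries that offset and then cancels it in the $\tau^+-\tau^-$ difference using the shared initial observation and hidden information $c$, is the rigorous way to obtain the lemma as stated.
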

\begin{proof}
    To show this, we would like to prove:
    \begin{align*}
        \sum_{t} r(o_t, a_t, c) = \sum_t A^\pi(o_t, a_t, c)
    \end{align*}
    We will prove this by telescoping:
    \begin{align*}
        & \sum_t A^\pi(o_t, a_t, c)\\
        = & \sum_t \left[Q^\pi(o_t, a_t, c) - V^\pi(o_t, c) \right]\\
        = & \sum_{t=1}^{N-1} \left[r(o_t, a_t, c) + \mathbb{E}_{o'_{t+1} \sim \mathcal{T}(\cdot|o_t, a_t, c)} V^\pi(o'_{t+1}, c) - V^\pi(o_t, c) \right]  + r(o_N, a_N, c)\\
        = & \sum_{t=1}^{N} r(o_t, a_t, c) + \sum_{t=1}^{N-1}\left[\mathbb{E}_{o'_{t+1} \sim \mathcal{T}(\cdot|o_t, a_t, c)} V^\pi(o'_{t+1}, c) - V^\pi(o_t, c) \right] \\
        = & \sum_{t=1}^{N} r(o_t, a_t, c),
    \end{align*}
    where the last equality follows by the assumption of deterministic transition.
\end{proof}

Additionally, we would like to also provide a theoretical justification for using an asymmetric critic for optimizing the actor with a different observation space. Intuitively, although for each sample such a training-time advantage function may give a different judgement compared to its regular counterpart, the following lemma shows that the policy gradient~\citep{Williams2004SimpleSG} estimated from an advantage function with training-time information is unbiased averaged over all samples. 

\begin{lemma}\label{lem:unbiased}
    For an MDP with each state $(o,c) \in \mathcal{O} \times \mathcal{C}$ and for any $\pi: \mathcal{O} \mapsto \mathcal{A}$, let $d^\pi_t(o_t, a_t, c)$ to be joint state-action occupancy distribution at step $t$, the following two estimators are both unbiased estimators of the policy gradient of $\pi$:
\begin{align*}
    \nabla \mathbb{E}_{\tau \sim \pi} \left(\sum_{t=1}^H r(o_t, a_t, c) \right) =& \sum_{t=1}^N \mathbb{E}_{o_t, a_t} A^\pi(o_t,a_t) \nabla \log \pi(a_t|o_t) 
    = \sum_{t=1}^N \mathbb{E}_{o_t,a_t, c} A^\pi(o_t,a_t, c) \nabla \log \pi(a_t|o_t)
\end{align*}
\end{lemma}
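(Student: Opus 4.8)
The plan is to establish both equalities in Lemma~\ref{lem:unbiased} by reducing them to the standard policy gradient theorem, with the two estimators differing only in whether the advantage function is allowed to condition on the hidden information $c$. First I would recall that the classical policy gradient identity writes $\nabla \mathbb{E}_{\tau \sim \pi}\left(\sum_t r(o_t, a_t, c)\right) = \sum_t \mathbb{E}_{o_t, a_t}\left[Q^\pi(o_t, a_t, c)\,\nabla \log \pi(a_t \mid o_t)\right]$, obtained by differentiating the trajectory distribution and using the likelihood-ratio trick $\nabla \pi = \pi \nabla \log \pi$. The key subtlety is that the policy $\pi(a_t \mid o_t)$ does not depend on $c$, so $\nabla \log \pi(a_t \mid o_t)$ carries no $c$-dependence even though the underlying dynamics and rewards do.

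Next I would handle the subtraction of a baseline. For the first (baseline-dependent on $o_t$ only) estimator, I would show that replacing $Q^\pi(o_t, a_t)$ by $A^\pi(o_t, a_t) = Q^\pi(o_t, a_t) - V^\pi(o_t)$ leaves the expectation unchanged, because $\mathbb{E}_{a_t \sim \pi(\cdot \mid o_t)}\left[V^\pi(o_t)\,\nabla \log \pi(a_t \mid o_t)\right] = V^\pi(o_t)\,\nabla \sum_{a_t} \pi(a_t \mid o_t) = V^\pi(o_t)\,\nabla 1 = 0$. The same argument applies to the $c$-conditioned baseline $V^\pi(o_t, c)$: since the baseline $V^\pi(o_t, c)$ is a function of the action-independent variables $(o_t, c)$ and $\sum_{a_t} \pi(a_t \mid o_t) = 1$ regardless of $c$, the term again integrates to zero. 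This shows each advantage formulation yields the same gradient as its corresponding $Q$-function formulation.

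The heart of the argument is then to show the two $Q$-based expressions agree, i.e. $\mathbb{E}_{o_t, a_t}\left[Q^\pi(o_t, a_t)\,\nabla \log \pi(a_t \mid o_t)\right] = \mathbb{E}_{o_t, a_t, c}\left[Q^\pi(o_t, a_t, c)\,\nabla \log \pi(a_t \mid o_t)\right]$, where on the left the $Q$-function marginalizes out $c$. I would write the marginalized quantity via the tower property: $Q^\pi(o_t, a_t) = \mathbb{E}_{c \sim p(\cdot \mid o_t, a_t)}\left[Q^\pi(o_t, a_t, c)\right]$, where $p(c \mid o_t, a_t)$ is the posterior over the hidden information given the observed history. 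Because $\nabla \log \pi(a_t \mid o_t)$ is a deterministic function of $(o_t, a_t)$ and does not depend on $c$, I can pull it outside the inner expectation over $c$, and the iterated expectation collapses so that integrating $Q^\pi(o_t, a_t, c)\,\nabla \log \pi(a_t \mid o_t)$ against the full joint $d^\pi_t(o_t, a_t, c)$ equals integrating $Q^\pi(o_t, a_t)\,\nabla \log \pi(a_t \mid o_t)$ against the marginal $d^\pi_t(o_t, a_t)$. This is exactly the statement that conditioning the critic on extra information that the actor's score function ignores does not bias the expected gradient.

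I expect the main obstacle to be making the measure-theoretic bookkeeping of the hidden variable $c$ precise: one must be careful that the occupancy $d^\pi_t(o_t, a_t, c)$ factors consistently with the posterior $p(c \mid o_t, a_t)$, that the $Q$-function without $c$ is genuinely the $c$-marginalized version, and that all the interchanges of gradient and expectation are justified. Once the correct conditioning structure is set up, the collapse of the iterated expectation over $c$ is immediate, so the real work is in the setup rather than the computation.
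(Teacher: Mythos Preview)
Your proposal is correct and follows essentially the same argument as the paper: derive the policy-gradient identity in $Q$-form via the likelihood-ratio trick, use the baseline trick $\mathbb{E}_{a_t}[V\,\nabla\log\pi]=0$ to pass to the advantage, and exploit the fact that $\nabla\log\pi(a_t\mid o_t)$ carries no $c$-dependence so the tower property $\mathbb{E}_{c\mid o_t,a_t}[A^\pi(o_t,a_t,c)]=A^\pi(o_t,a_t)$ collapses the two estimators. The only cosmetic difference is that the paper applies the tower property to $A^\pi$ after subtracting the baseline, while you apply it to $Q^\pi$ first and then subtract; the content is identical.
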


\begin{proof}
    The proof of this lemma is similar to the standard policy gradient analysis~\citep{agarwal2019reinforcement}.
    \begin{align*}
         \nabla \mathbb{E}_{\tau \sim \pi} \left(\sum_{t=1}^N r(o_t, a_t, c) \right) 
        = & \nabla \EE_{c} \mathbb{E}_{o_1} V^\pi(o_1)\\
        = & \EE_{c} \mathbb{E}_{o_1} \nabla V^\pi(o_1)\\
        = & \EE_{c} \mathbb{E}_{o_1} \nabla  \left[\sum_{a_1} \pi(a_1|o_1) Q^\pi(o_1, a_1)\right] \\
        = & \EE_{c} \mathbb{E}_{o_1} \left[\sum_{a_1} (\nabla \pi(a_1|o_1)) Q^\pi(o_1, a_1) + \sum_{a_1}\pi(a_1|o_1) \nabla Q^\pi(o_1, a_1) \right]\\
        = & \EE_{c} \mathbb{E}_{o_1} \left[\sum_{a_1} \pi(a_1|o_1) (\nabla \log\pi(a_1|o_1)) Q^\pi(o_1, a_1) + \sum_{a_1}\pi(a_1|o_1) \nabla Q^\pi(o_1, a_1) \right] \\
        = & \EE_{c} \mathbb{E}_{o_1} \left[\EE_{a_1} (\nabla \log\pi(a_1|o_1)) Q^\pi(o_1, a_1) + \sum_{a_1}\pi(a_1|o_1) \nabla Q^\pi(o_1, a_1) \right] \\
        = & \EE_{c} \mathbb{E}_{o_1} \left[\EE_{a_1} (\nabla \log\pi(a_1|o_1)) Q^\pi(o_1, a_1) + \EE_{a_1} \EE_{o_2}\nabla V^\pi(o_2) \right] \\
        = & \EE_{c} \sum_{t=1}^N \EE_{o_t, a_t} Q^\pi(o_t, a_t) \nabla \log \pi(a_t|o_t).
    \end{align*}
    To proceed, we need to first show a useful equality:
    \begin{align*}
        \EE_{o_t, a_t} V^\pi(o_t) \nabla \log \pi(a_t|o_t)
        = & \EE_{o_t} \sum_a \pi(a_t|o_t) V^\pi(o_t) \nabla \log \pi(a_t|o_t)\\
        = & \EE_{o_t} \sum_a V^\pi(o_t) \nabla \pi(a_t|o_t)\\
        = & \EE_{o_t}  V^\pi(o_t) \nabla \sum_a \pi(a_t|o_t)\\
        = & \EE_{o_t}  V^\pi(o_t) \nabla 1\\
        = &0
    \end{align*}
    Therefore, we can use the advantage function instead of the $Q$-function in the expression of policy gradients:
    \begin{align*}
         \nabla \mathbb{E}_{\tau \sim \pi} \left(\sum_{t=1}^N r(o_t, a_t, c) \right) 
         = & \EE_{c} \sum_{t=1}^N \EE_{o_t, a_t} Q^\pi(o_t, a_t) \nabla \log \pi(a_t|o_t)\\
         = & \EE_{c} \sum_{t=1}^N \EE_{o_t, a_t} (Q^\pi(o_t, a_t) - V^\pi(o_t, a_t)) \nabla \log \pi(a_t|o_t)\\
         = & \EE_{c} \sum_{t=1}^N \EE_{o_t, a_t} A^\pi(o_t, a_t) \nabla \log \pi(a_t|o_t)\\
         = & \sum_{t=1}^N \EE_{o_t, a_t}  \left(\nabla \log \pi(a_t|o_t)\right) \EE_{c}  A^\pi(o_t, a_t)\\
         = & \sum_{t=1}^N \EE_{o_t, a_t}  \left(\nabla \log \pi(a_t|o_t)\right) \EE_{c}  A^\pi(o_t, a_t, c)\\
         = & \EE_{c} \sum_{t=1}^N \EE_{o_t, a_t}  \left(\nabla \log \pi(a_t|o_t)\right)   A^\pi(o_t, a_t, c),
    \end{align*}
    where the second last equation follows from the fact that $\EE_{c \sim d^\pi_t(\cdot|o_t, a_t)} A^\pi(o_t, a_t, c) = A^\pi(o_t, a_t)$.
\end{proof}

\section{Prompts}
For completeness, we have included the prompt that we used for testing different models on Backend Programming in ~\autoref{fig:backend-prompt} and on Frontend Programming in ~\autoref{fig:frontend-prompt}.

\begin{figure}
    \centering
    \setlength{\fboxrule}{0.5pt}
    \fbox{
        \parbox{.95\textwidth}{
            \textbf{Backend Programming Prompt}\\
You are a helpful LLM agent. 

Your task is to help a human user to resolve their problem, in particular python programming.

1) Note that the problem is highly personalized so you need to explicitly gather information by asking questions to the human user about some hidden information and implicit constraints.

YOU SHOULD TRY TO ASK CLARIFICATION QUESTIONS.

2) Note that you should not ask human users complicated questions as they will only answer questions briefly in two sentences.

3) When you have gathered enough information to answer, say "I WANT TO ANSWER:" in the beginning of your response and provide your final answer.

4) Note that you can only interact with the human users WITHIN 10 back-and-forth rounds and you have to provide your final answer before the conversation ends.

5) You should be as concise as possible in your response to human.

"I WANT TO ANSWER:" should be included in your response to human if you think that you have gathered enough information for addressing this problem.

Directly output the raw python code after "I WANT TO ANSWER:".

Complete only the immediate agent response in this dialogue:
        }
    }
    \caption{\textbf{The prompt used for testing different models on Backend Programming task.}}
    \label{fig:backend-prompt}
\end{figure}

\begin{figure}
    \centering
    \setlength{\fboxrule}{0.5pt}
    \fbox{
        \parbox{.95\textwidth}{
            \textbf{Frontend Design Prompt}\\
You are a helpful LLM agent. 
Your task is to help a human user to code a complete website with a good design in HTML and Tailwind CSS.
Write the code inside a tag <html>.
Write real and long sentences about the business.
You don’t have to include images, but if you do, use only this source
https://picsum.photos/id/48/W/H, by replacing W and H with the width and height of the image.
Keep the id the same to only use id 48 image.

1) Note that the problem is highly personalized so you need to go through a few rounds of revisions.

2) When you have gathered enough information to answer, say "I WANT TO ANSWER:" in the beginning of your response and provide your final answer.

3) Note that you can only interact with the human users WITHIN 10 back-and-forth rounds and you have to provide your final answer before the conversation ends.

4) You will be judged both by the quality of the final answer and the efficiency of the conversation.

5) You can include ONLY ONE snippet raw html and Tailwind css code (wrapped in html tag)in your response to human user to ask how is the proposed design different from what the human user wants. 
This snippet of raw html and Tailwind css code (WRAPPED IN html TAG) will be rendered for the human to see a screenshot of the webpage. The human user will respond by comparing your rendered webpage with the webpage that the human user has in mind.

6) You need to make sure that your html webpage looks exactly as the human user wants, including the overall layout, navigation bars, background color etc.

7) The human user can only see your rendered image and provide suggestions based on the rendered image, and not any text questions.

First output your thought on your remaining uncertainties about the understanding of the problem and user preferences such as name of the function, input format, output format, and etc.
Then say "OUTPUT:\\n" followed by your proposal html.
        }
    }
    \caption{\textbf{The prompt used for testing different models on Frontend Design task.}}
    \label{fig:frontend-prompt}
\end{figure}
\section{Qualitative Comparisons of Different Credit Assignment Methods} \label{app:credit_assignment_qualitative}
We present qualitative comparison results of different credit assignment in \autoref{fig:qualitative}. First of all, we observe that LLM-as-a-Judge can easily get distracted by the length and format of the response without actually attending to its utility for task success. Furthermore, while being a natural practice in deep RL literature, the use of a value function fails to reasonably predit the expected future utility in unseen tasks. In particular, it predicts that the first candidate response has a probability of \emph{97\%} to lead to the final success of the agent despite being only the second turn out of 10 turns and this candidate response being phrased in a very confusing way. In contrast, \methodname{} is able to tell the advantage of the second response with a higher score because it is important in this task for the agent to figure out that the returned list can contain duplicate objects. This shows the importance of the RL algorithmic choices of \methodname{} and that the go-to practice of training a value function may generalize arbitrarily poorly in unseen tasks.

\begin{figure*}[!h]
     \centering
    \includegraphics[width=0.8\textwidth]{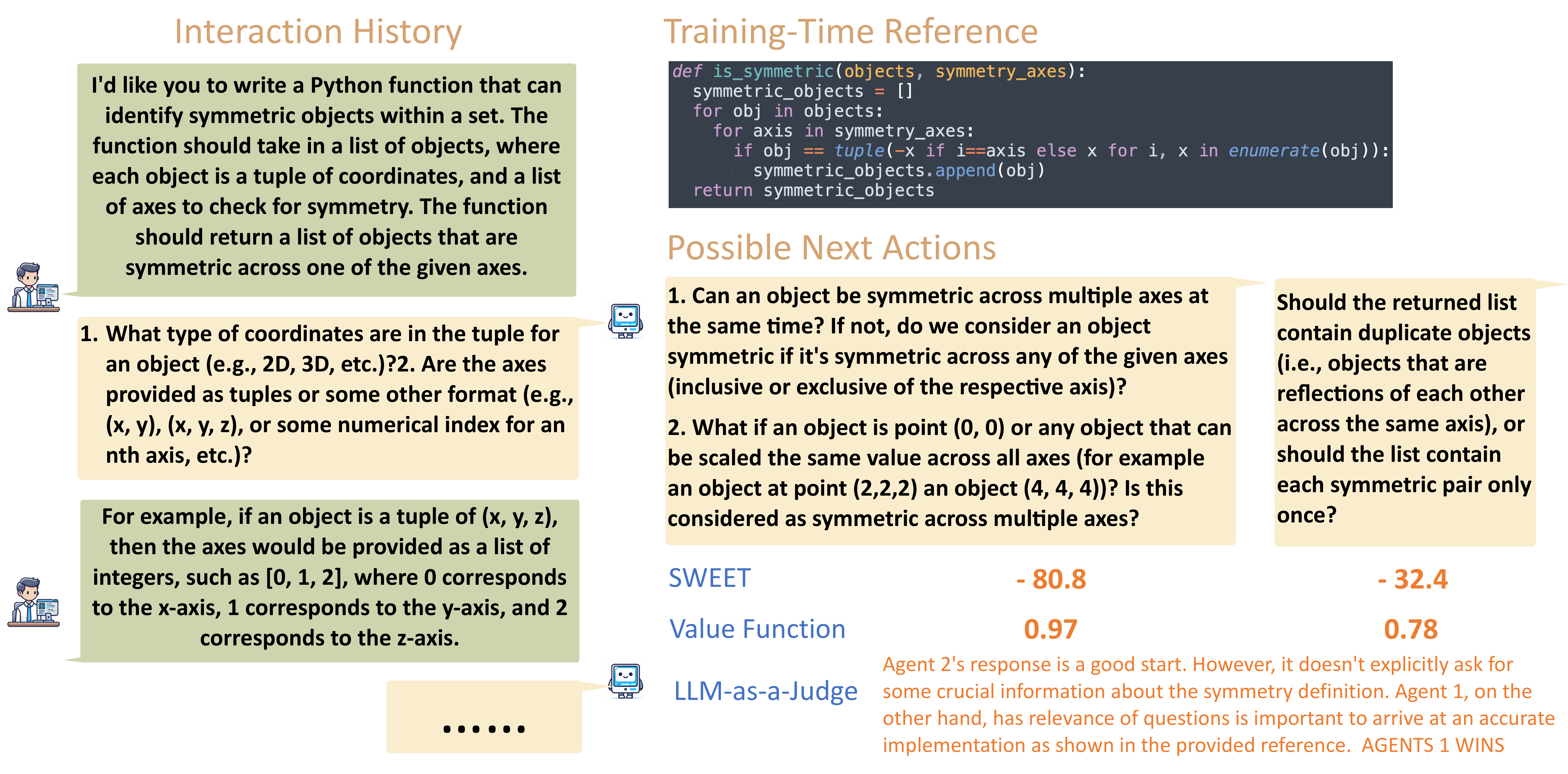}
        \caption{\textbf{Qualitative comparisons between different credit assignment methods.} A fixed LLM-as-a-Judge can be easily distracted by length and formats of the actions without considering their actual utility. A value function generalizes poorly to unseen tasks. In contrast, SWEET can attend to the actual utility of the action for task success and generalize well.}
        \label{fig:qualitative} 
        \vspace{-0.5cm}
\end{figure*}

\section{Full Qualitative Examples} \label{app:full_qualitative}
To demonstrate the level of difficulty of tasks in \benchmarkname{} and provide a qualitative comparisons of different models, we have included examples of full trajectories in this section. 

In particular, in \autoref{fig:backend_ours_qualiatative}, \autoref{fig:backend_ours_continued_qualiatative}, \autoref{fig:backend_zeroshot}, \autoref{fig:backend_gpt4o}, we have provided full trajectories on Backend Programming for \methodname{} Llama-3.1-8B-Instruct, Zeroshot Llama-3.1-8B-Instruct, and Zeroshot GPT4-O. While zeroshot baselines do try to propose some critical questions to seek more information from the human collaborator, they quickly jump into conclusions without gathering enough information, thus resulting in a wrong answer. Such failure modes exist even for stronger general-purpose LLMs like GPT4-O, indicating that task-specific tuning may always be necessary despite the improvement in the capability of the base model. In contrast, \methodname{} Llama-3.1-8B-Instruct learnt back-and-forth information-seeking behaviors and only answered the question once all information has been collected. Surprisingly, we found that RL training also results in some emergent behaviors such as reasoning with longer chain-of-thought and even self-corrections as shown in the last response from the agent in \autoref{fig:backend_ours_continued_qualiatative}.

We also include a full trajectory example on Frontend Design with \methodname{} Llama-3.1-8B-Instruct in \autoref{fig:frontend_1}, \autoref{fig:frontend_2}, \autoref{fig:frontend_3}, \autoref{fig:frontend_4}, \autoref{fig:frontend_5}, \autoref{fig:frontend_6}. We would like to note the significant complexity of this task where the agent needs to reason about HTML code over an extended horizon (up to 16k tokens), as a HTML code snippet is included in the response of each turn. After \methodname{} training, the LLM agent has learnt nuanced collaborative and reward-maximizing behaviors where it first proposes a scratch solution to gather coarse-grained feedback and only perform fine-grained edits in the end.

\begin{figure*}[!h]
     \centering
     \vspace{-0.3cm}
    \includegraphics[width=.95\textwidth]{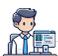}
        \caption{\footnotesize{\textbf{Example full trajectory for Backend Programming with \methodname{} Llama-3.1-8B-Instruct.} After training, the LLM agent has learnt back-and-forth information seeking behaviors before giving the final answer. }}
        \label{fig:backend_ours_qualiatative} 
\end{figure*}

\begin{figure*}[!h]
     \centering
     \vspace{-0.3cm}
    \includegraphics[width=.95\textwidth]{figures/backend_ours_continued.pdf}
        \caption{\footnotesize{\textbf{Example full trajectory for Backend Programming with \methodname{} Llama-3.1-8B-Instruct (Continued).} After training, the LLM agent has learnt back-and-forth information seeking behaviors before giving the final answer. }}        \label{fig:backend_ours_continued_qualiatative} 
\end{figure*}

\begin{figure*}[!h]
     \centering
    \includegraphics[width=.95\textwidth]{figures/backend_zeroshot.pdf}
        \caption{\footnotesize{\textbf{Example full trajectory for Backend Programming with Zeroshot Llama-3.1-8B-Instruct.} While the agent has asked a few questions, it quickly jumps into conclusions, resulting in a wrong final answer.}}
        \label{fig:backend_zeroshot} 
\end{figure*}

\begin{figure*}[!h]
     \centering
    \includegraphics[width=.95\textwidth]{figures/backend_gpt4o.pdf}
        \caption{\footnotesize{\textbf{Example full trajectory for Backend Programming with Zeroshot GPT-4O.} While the agent does propose critical questions to the human collaborator, it also has the issue of jumping into conclusions.}}
        \label{fig:backend_gpt4o} 
\end{figure*}

\begin{figure*}[!h]
     \centering
    \includegraphics[width=.95\textwidth]{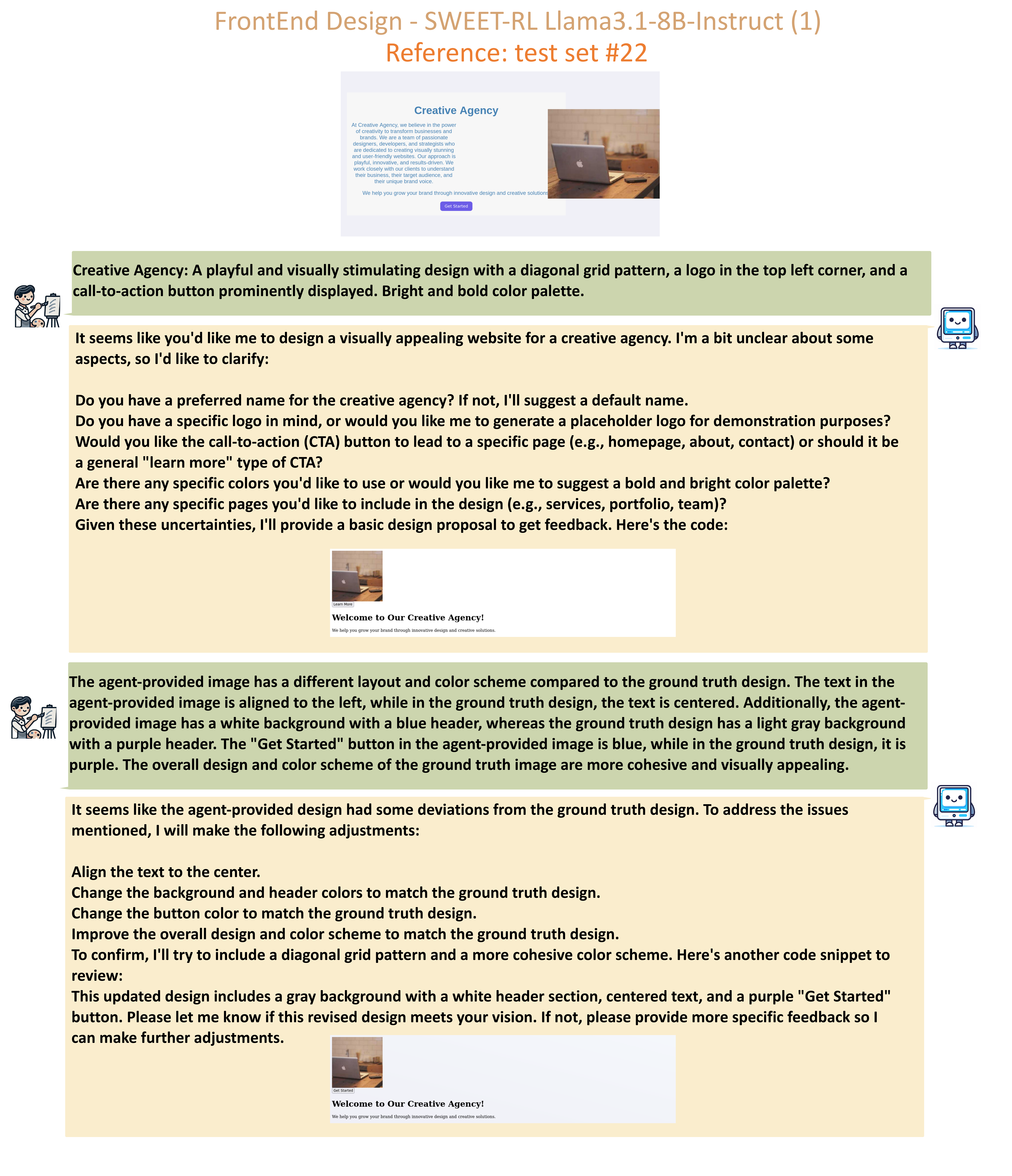}
        \caption{\footnotesize{\textbf{Example full trajectory for Frontend Design with \methodname{} Llama-3.1-8B-Instruct (1).} After training, the LLM agent has learnt sophisticated task-specific strategies to optimize the final reward.}}
        \label{fig:frontend_1} 
\end{figure*}

\begin{figure*}[!h]
     \centering
    \includegraphics[width=.95\textwidth]{figures/frontend_ours_2.pdf}
        \caption{\footnotesize{\textbf{Example full trajectory for Frontend Design with \methodname{} Llama-3.1-8B-Instruct (2).} After training, the LLM agent has learnt sophisticated task-specific strategies to optimize the final reward.}}
        \label{fig:frontend_2} 
\end{figure*}

\begin{figure*}[!h]
     \centering
    \includegraphics[width=.95\textwidth]{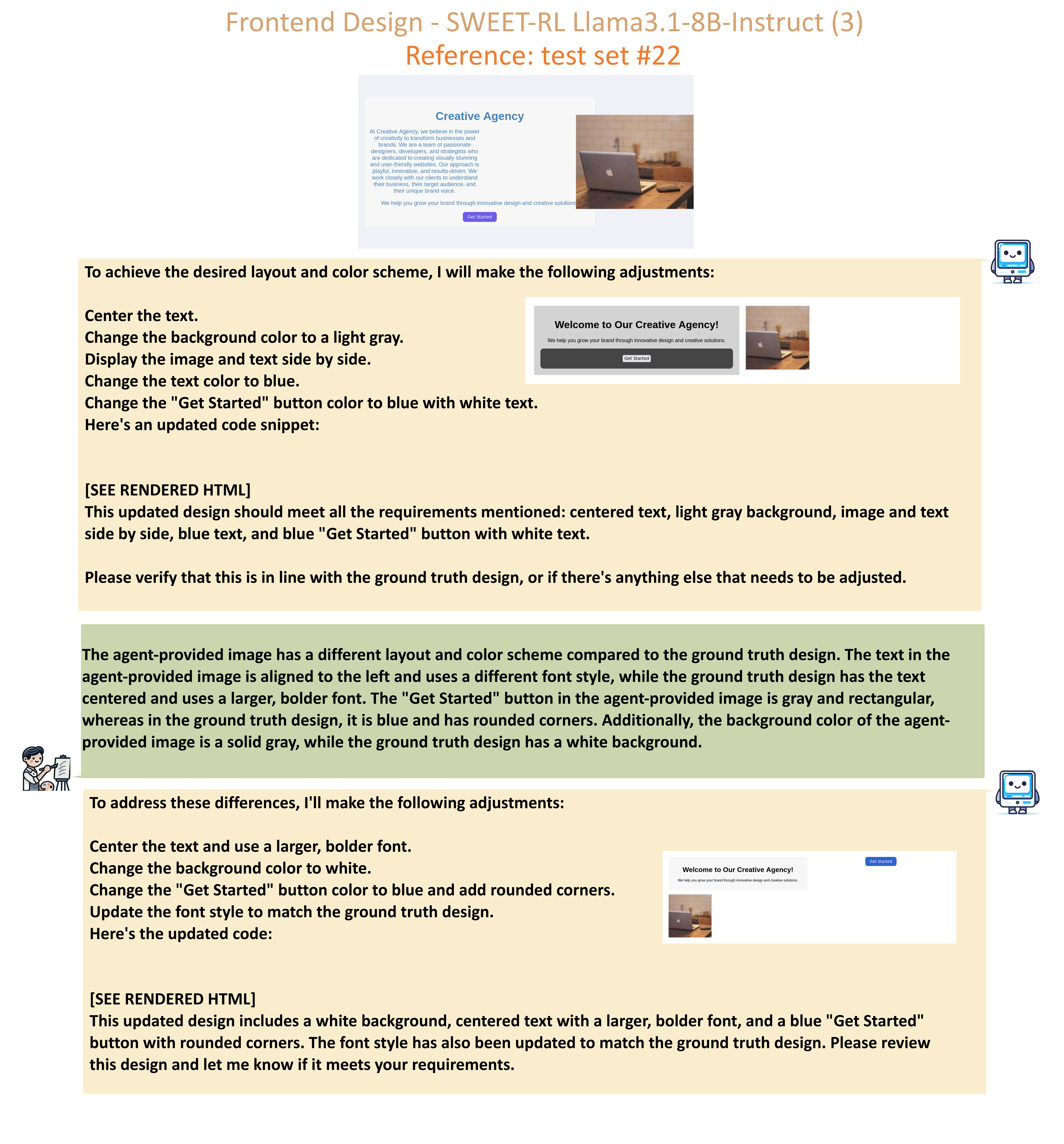}
        \caption{\footnotesize{\textbf{Example full trajectory for Frontend Design with \methodname{} Llama-3.1-8B-Instruct (3).} After training, the LLM agent has learnt sophisticated task-specific strategies to optimize the final reward.}}
        \label{fig:frontend_3} 
\end{figure*}

\begin{figure*}[!h]
     \centering
    \includegraphics[width=.95\textwidth]{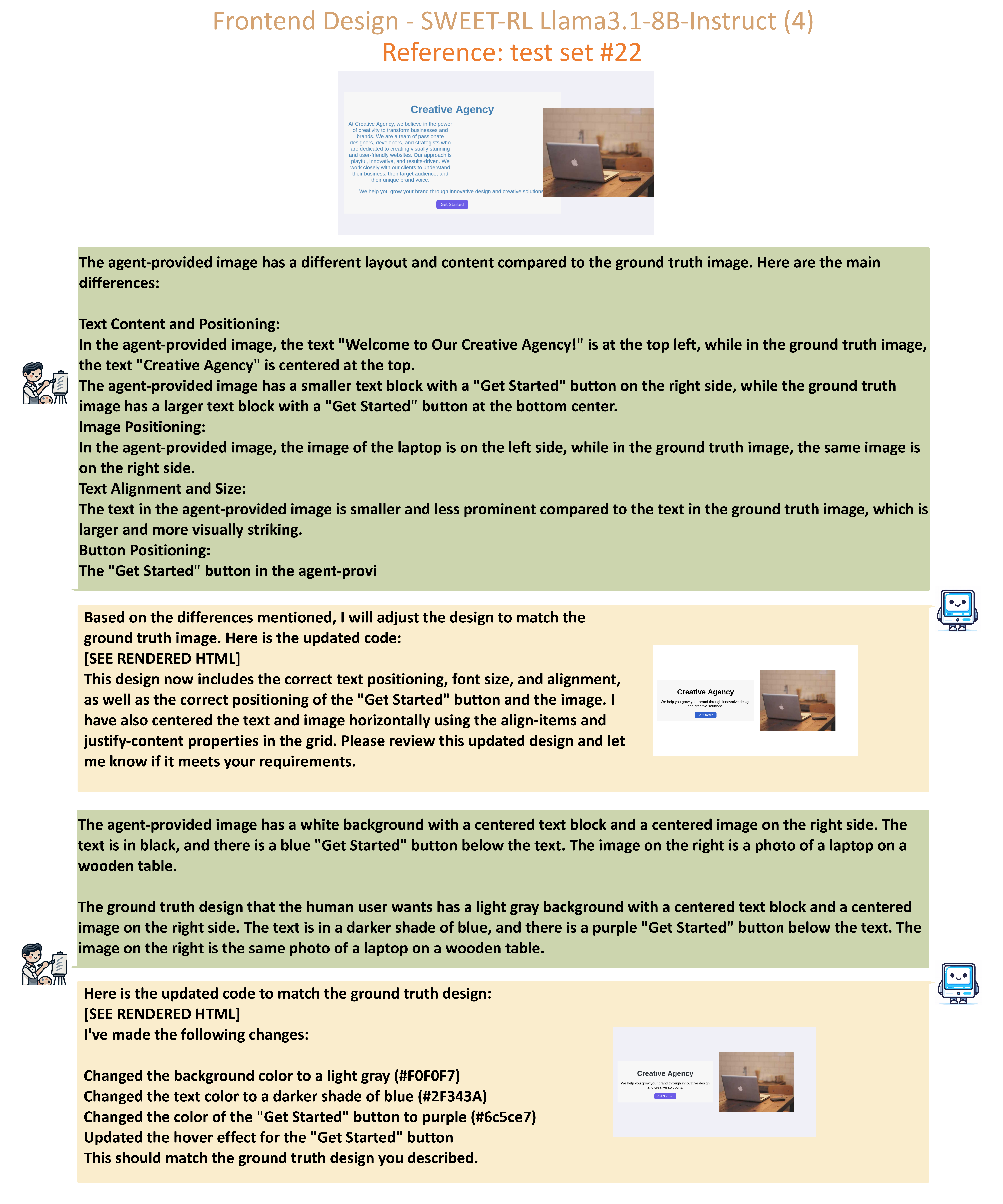}
        \caption{\footnotesize{\textbf{Example full trajectory for Frontend Design with \methodname{} Llama-3.1-8B-Instruct (4).} After training, the LLM agent has learnt sophisticated task-specific strategies to optimize the final reward.}}
        \label{fig:frontend_4} 
\end{figure*}

\begin{figure*}[!h]
     \centering
    \includegraphics[width=.95\textwidth]{figures/frontend_ours_5.pdf}
        \caption{\footnotesize{\textbf{Example full trajectory for Frontend Design with \methodname{} Llama-3.1-8B-Instruct (5).} After training, the LLM agent has learnt sophisticated task-specific strategies to optimize the final reward.}}
        \label{fig:frontend_5} 
\end{figure*}

\begin{figure*}[!h]
     \centering
    \includegraphics[width=.95\textwidth]{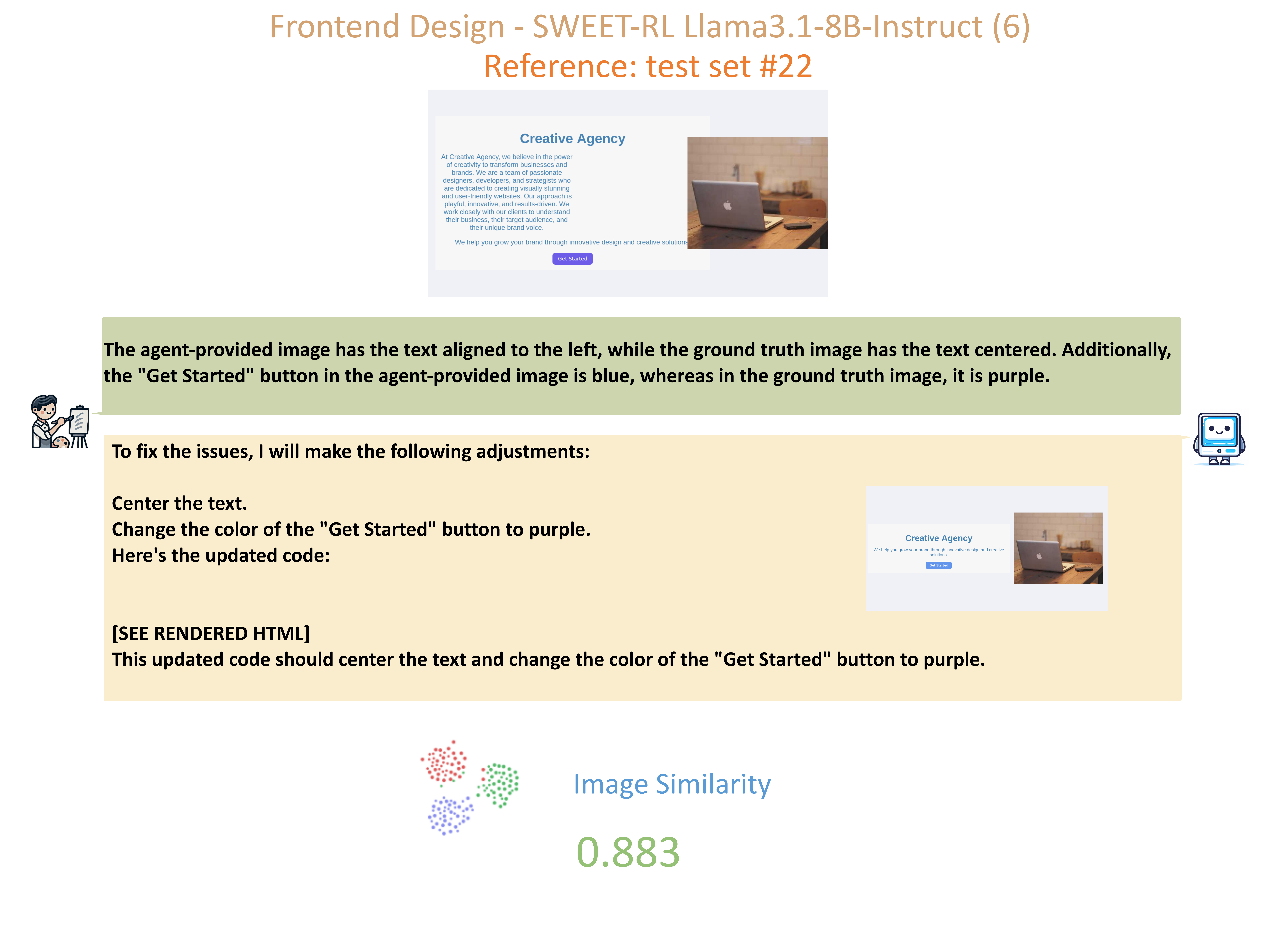}
        \caption{\footnotesize{\textbf{Example full trajectory for Frontend Design with \methodname{} Llama-3.1-8B-Instruct (6).} After training, the LLM agent has learnt sophisticated task-specific strategies to optimize the final reward.}}
        \label{fig:frontend_6} 
\end{figure*}

\end{document}